\def\cRn{\mathcal{R}_n}
\def\cFg{\mathcal{F}_g}
\def\chatRn{\hat{\mathcal{R}}_n}
\def\cN{\mathcal{N}}
\def\cZ{\mathcal{Z}}
\def\Uhat{\hat{U}}
\def\Ahat{\hat{A}}
\def\Uhatomega{\hat{U}_{\Omega}}
\def\Uomega{U_{\Omega}}
\def\what{\hat{w}} 
\def\Lhat{\hat{L}}
\def\fhat{\hat{f}}
\def\Omegax{\Omega_x}
\def\xomega{x_\Omega}
\def\fest{\hat{f}_{\hat{w},\hat{U},\gamma}}
\def\ones{\mathbbm{1}}
\newcommand{\BlackBox}{\rule{1.5ex}{1.5ex}}  % end of proof
\newtheorem*{assumption*}{Assumption}
\newtheorem{thm}{Theorem}
\newtheorem{proof sketch}{Proof Sketch}
\newtheorem{lemma}{Lemma}
\newtheorem{defn}{Definition}
\def\bbE{\mathbb{E}}
\def\bbR{\mathbb{R}}
\def\cR{\mathcal{R}}
\def\cF{\mathcal{F}}
\def\ones{\mathbbm{1}}
\def\diag{\diag}
\def\cR{\mathcal{R}}
\newcommand{\defeq}{\mbox{$\;\stackrel{\mbox{\tiny\rm def}}{=}\;$}}
\DeclareMathOperator\erf{erf}
\DeclarePairedDelimiter\abs{\lvert}{\rvert}%
\DeclarePairedDelimiter\norm{\lVert}{\rVert}%
\let\oldabs\abs
\def\abs{\@ifstar{\oldabs}{\oldabs*}}
\let\oldnorm\norm
\def\norm{\@ifstar{\oldnorm}{\oldnorm*}}
\begin{document}
\title{Sparse Linear Regression With Missing Data}

\author[1]{Ravi Ganti \thanks{gantimahapat@wisc.edu}}
\author[2]{Rebecca M. Willett\thanks{rmwillett@wisc.edu}}
\affil[1]{Wisconsin Institutes for Discovery, 330 N Orchard St, Madison, WI, 53715}
\affil[2]{Department of Electrical and Computer Engineering, University of Wisconsin-Madison, Madison, WI, 53706}
\renewcommand\Authands{ and }
\maketitle

\begin{abstract} This paper proposes a fast and accurate method for sparse regression in the presence of missing data. The underlying statistical model encapsulates the low-dimensional structure of the incomplete data matrix and the sparsity of the regression coefficients, and the proposed algorithm jointly learns the low-dimensional structure of the data and a linear regressor with sparse coefficients. The proposed stochastic optimization method, Sparse Linear Regression with Missing Data (SLRM), performs an alternating minimization procedure and scales well with the problem size. Large deviation inequalities shed light on the impact of the various problem-dependent parameters on the expected squared loss of the learned regressor. Extensive simulations on both synthetic and real datasets show that SLRM performs better than competing algorithms in a variety of contexts.  \end{abstract}

\section{Introduction}
%1)  General bhaat about missing data and compressed data
% 2) MOtivate our statistical model. Say that we are exploiting the fact that the data comes from a lower dimensional subspace, and that the regressor is sparse in the underlying representation.
% 3) Give an example of large D, d case, where we get maximum benefits
% 4) 
Modern statistical data analysis requires tools that can handle complex, large scale datasets. Due to constraints in the data collection process, one often has incomplete datasets, i.e., datasets with missing entries, with which we need to perform statistical inference. For instance, in sensor networks, readings from all the sensors might not be available at all the times because of malfunctions in sensors, or simply because it is too expensive to gather readings from all the sensors at all the times.  Similarly, when conducting surveys, responders may avoid answering certain questions for the sake of privacy or otherwise, leading to missing entries in survey data. Recommender systems, implement algorithms that are required to train on data with missing entries. For example, popular recommendation engines such as Netflix, online radio services such as Pandora, social networks such as Facebook, LinkedIn regularly deal with prediction problems involving data with missing entries. An ever increasing demand to gather as much data as possible, clean or not,  in this big-data era, has led to the need for statistical methods that can deal with not just clean data but also noisy data with missing components. 

The focus of this paper is on sparse linear regression when the feature vectors or design matrix have missing elements. Matrix completion methods allow missing elements to be imputed accurately, but generally do not account for any auxiliary label information. Similarly, sparse linear regression and LASSO methods rely upon a fully-known design matrix. One might imagine using matrix completion to impute missing entries and then applying sparse linear regression methods to the completed design matrix; {\em we demonstrate that this two-stage approach is sub-optimal, and propose a unified regression framework that yields significantly better performance in a variety of tasks.}

% Our problem statement. We want to do sparse linear regression with missing/ compressed data. 
% Statistical model. Why is this model important or useful

\subsection{Contributions.} 
Our contributions are as follows
\begin{enumerate}
\item In this paper, we propose a statistical model (Section~\ref{sec:model}) for the problem of sparse linear regression with missing data. Our model captures low-rank structure in the data and sparsity of the regression coefficients in the lower dimensional representation of the data. 
\item We provide an optimization-based approach that simultaneously learns the underlying subspace structure and the sparse regression coefficients (Section~\ref{sec:learning_alg}). Our optimization algorithm, called SLRM, takes a combination of stochastic first order and second order steps, alternating between the different parameters of the proposed statistical models. 
\item We establish large deviation bounds (Section~\ref{sec:generr}) for the risk of the regressor learned by our algorithm in terms of the empirical loss, the ambient dimension $D$, and a parameter $\gamma$ used by our learning algorithm.  Using our performance bounds we can understand the impact of the amount of missingness on the training error and the test error.
\item We provide extensive experimental results (Section~\ref{sec:expts}) on synthetic and real datasets, comparing the performance of SLRM and a competing algorithm. From our experimental results, we conclude that SLRM has good noise tolerance properties, and uses the label information well to learn a good regressor, as measured by its mean squared error on a test dataset with missing features. 
\end{enumerate}
\section{Problem Formulation: Sparse Regression With Missing Data} 
\label{sec:model}
Given $D$-dimensional labeled data with missing features, we are interested in prediction, particularly regression problems.  Let  $X=(x_1,x_2,\ldots,x_n)\in\bbR^{D\times n}$ be a data matrix, where the columns have been sampled i.i.d.\ from a distribution. Since we are interested in regression problems with missing data, we do not get to see all the entries of the data matrix $X$. To formalize this notion, let $\Omega_1,\ldots,\Omega_n$ be subsets of $\{1,2,\ldots,D\}$. Given an index set $\Omega$, let $P_{\Omega}(x)$ denote a sub-vector of $x$ consisting of elements whose indices are elements of the set. 
 We observe a dataset $(P_{\Omega_1}(x_1),y_1,\Omega_1),\ldots,(P_{\Omega_n}(x_n),y_n,\Omega_n)$ of size $n$, i.e., we observe only a few entries of the data points $x_1,\ldots,x_n$, where the entries are indexed by the sets $\Omega_1,\ldots,\Omega_n$ respectively. We call the vector $Y=(y_1,y_2,\ldots,y_n)^\top$ the label vector. Given this training data, we are required to learn a regressor, which when given an unseen test point $(P_\Omega(x),\Omega)$, predicts a label $\hat{y}$ that is close to the true label of $x$. In order to solve this problem, we consider the following statistical model:
\begin{align}
  X&=U_*A_*+\epsilon_{X}\label{eqn:mod1}\\
  Y&= A_*^\top w_*+\epsilon_{Y}\label{eqn:mod2},
\end{align}
where $w_*$ is a sparse vector in $\bbR^d$, $U_*$ in $\bbR^{D\times d}$ ($d<D$) is a matrix with full column rank, and $A_{*}=[\alpha_{1*},\ldots,\alpha_{n*}]$ is a matrix in $\bbR^{d\times n}$. We call $\alpha_{i*}$  the code of $x_i$ w.r.t.\ the 
matrix $U$. The vector $\epsilon_Y=(\epsilon_{y_1},\ldots,\epsilon_{y_n})^\top$ is random noise that is independent of other problem parameters such as $U,A,w,\Omega_1,\ldots,\Omega_n$. Similarly $\epsilon_X=[\epsilon_{x_1},\ldots,\epsilon_{x_n}]$ is a noise matrix with i.i.d.\ entries, sampled independently of other problem parameters. 

Our statistical model given in Equations~\ref{eqn:mod1},\ref{eqn:mod2} is motivated by the fact, for many data matrices of interest, even though the ambient data dimensionality is large, the data lies close to a lower dimensional subspace of dimensionality $d$. Given, this $d$-dimensional representation of the data, we are interested in learning a linear regressor with sparse coefficients that predicts the labels well. 

To the best of our knowledge, for the problem of regression with missing data, our work is the first work that simultaneously exploits both a low-rank structure of the incomplete data matrix and the sparsity of regressor. The assumption of a parametric model for our regression problem allows us to go beyond the transductive setting which was inherent in the approach of~\cite{goldberg2010transduction} (as detailed in Section~\ref{related}).
While we consider $d<D$, we are also interested in cases where $d$ is of the same order as $D$ and the regressor is sparse in the lower-dimensional representation of the data. This model is relevant to many applications, as described in Section~\ref{sec:expts}

For instance, in a sensor network $D$ sensors listen to $d$ sources. 
As one would expect, this sensor data is far from being ``clean'': it is usually noisy, and has missing entries.
A common approach in analyzing such sensor data is to perform a subspace analysis of the sensor data~\citep{tuncer2009classical,krim1995sensor,roy1989esprit} and find the best fit $d$-dimensional subspace of the data. 
For modern sensor networks, both $D$ and $d$ are large; that is, a large number of heterogeneous sensors listen to a large number of sources. Exploiting the underlying $d$-dimensional structure during regression yields increased robustness to noise and missing data. %, and unknown sparsifying transforms. 
% When solving regression problems in this domain, it would then be natural to learn a sparse regressor using a $d$-dimensional representation of data obtained by performing a subspace analysis of the sensor data. 

\textbf{Notation}. Like in the definition of $A_*$, $A=[\alpha_1,\alpha_2,\ldots,\alpha_n]$, $\hat{A}=[\hat{\alpha}_1,\hat{\alpha}_2,\ldots,\hat{\alpha}_n]$. % Unless otherwise specified, $\Omega_1,\ldots,\Omega_n$ are sets of indices each being a subset of $\{1,2,\ldots,D\}$. 
Given a matrix $M$, denote $P_{\Omega}(M)$ as the matrix whose rows are those rows of $M$ whose indices are elements of the set $\Omega$. For example, if $\Omega=\{1,3,4\}$, then $P_{\Omega}(M)$ has rows 1,3,4 of matrix $M$. At times, for ease of notation we may write $x_{\Omega},M_{\Omega}$ to denote $P_{\Omega}(x), P_{\Omega}(M)$ respectively. By $I_d$ we represent an identity matrix with $d$ rows.

\section{Related Work}
\label{related}
Our statistical model bears resemblance to the statistical model used in partial least squares (PLS)~\citep{hastie2003esl}. However, unlike PLS we enforce additional sparsity assumptions and can handle missing data. Dictionary learning was introduced for unsupervised data analysis for better data representation~\citep{maurer2010k,vainsencher2011sample}. The idea is to learn a dictionary so that each data point could be represented well as a sparse linear combination of the columns of the dictionary. Dictionary learning has also been extended to prediction problems~\citep{mairal2012task,szlam2009discriminative}, where the problem is to learn a dictionary for the prediction problem at hand. The problem that we tackle in this paper can be seen as learning a dictionary for prediction problems in the presence of missing data. Sufficient dimensionality reduction (SDR)~\citep{suzuki2013sufficient,fukumizu2009kernel}, is a form of supervised dimensionality reduction, where the problem is to find a central subspace $Z$ such that the prediction task is independent of the unlabeled data given the projection $\Pi_Z X$ of unlabeled data onto the central subspace. SDR focuses on achieving conditional independence between $Y$ and $X$ given $\Pi_Z X$ without making any assumptions on the functional dependency of the prediction task on the central subspace. SDR does not fully exploit linear relationships between labels and features that arise in many practical settings, and the problem of SDR with missing data has not been investigated. ~\citet{loh2011high} investigate non-convex algorithms based on maximum likelihood estimation for the problem of high-dimensional regression with missing data. However, they work with a different statistical model which does not capture the low-rank structure of the data and assumes that the regressor is sparse in the ambient space. In contrast, our statistical model explicitly assumes that the missing data matrix has a low-rank structure and exploits this low-rank structure in data to learn a regressor with sparse coefficients in the low-dimensional representation of the data. Another closely related work is that of~\citep{goldberg2010transduction}, where the authors consider the problem of multi-task regression with missing data features and missing labels. The authors pose this problem as a matrix completion problem of the matrix formed by the concatenation of the data and label matrices. However, the authors deal with the transductive setting only -- their approach does not allow one to predict a label for a new test datapoint. 
In contrast, this paper exploits an alternative statistical model for how the labels are generated that allows prediction on new test datapoints.
% and by proposing an algorithm to learn the parameters in the model.  
Finally, Principal Component Regression (PCR)~\citep{hastie2003esl} is a dimensionality-reduction based procedure for regression without missing data. PCR first performs PCA on the unlabeled dataset, followed by least squares regression in the PCA space. This two-step approach does not exploit label information when estimating the underlying low-dimensional model; the limitations of this choice are detailed in Section~\ref{sec:learning_alg}. 

As mentioned above, we use stochastic optimization methods that can operate on streaming data to ensure scalable algorithms. Thus the low-rank structure in our problem is estimated using techniques drawn from the subspace tracking literature. Oja's method~\citep{oja1982simplified}, PAST~\citep{yang1995projection} and variations such as OPAST~\citep{abed2000fast} perform subspace tracking when there is no missing data. More recent developments, such as GROUSE~\citep{balzano2010online} and PETRELS~\citep{chi2012petrels} can handle missing data quickly and accurately. However, these algorithms are inherently unsupervised and hence do not directly address the supervised regression problem considered in this paper.

\section{An Optimization Approach And A Learning Algorithm} \label{sec:learning_alg}
% Strawmen approaches and fallacies. What do we aim to achieve using this joint optimization procedure in contrast to a step wise procedure ?
 Before we describe our optimization based approach to the problem considered in this paper, we discuss a multi-step approach (essentially an extension of PCR to missing data problems) that exploits both the low rank of the incomplete data matrix  and the sparsity of the regression coefficients:
\begin{enumerate}
\item Solve the following optimization problem
\begin{equation}\min_{U,\alpha_1,\ldots, \alpha_n} \sum_{i=1}^n||P_{\Omega_i}(x_i)-P_{\Omega_i} (U\alpha_i)||_2^2.
\label{pcr}
\end{equation}
The above problem aims to consider a decomposition of the incomplete data matrix $X$ as the product of two matrices $U, A$, such that the Frobenius norm of the
difference between $X$ and $UA$ over the observed entries is minimized. This problem has been studied in the matrix completion literature~\citep{koren2009matrix, jain2013low}, and in the subspace identification and tracking literature~\citep{chi2012petrels,hua1999new}. A standard approach to solving this problem is via alternating minimization, where we alternate between optimization w.r.t. $U$ and the vectors $\alpha_1,\ldots,\alpha_n$.  In the special case that for all $i=1,\ldots,n$, $\Omega_i=\{1,2,\ldots,D\}$ (as in classical PCR), the solution to the above problem is obtained by performing PCA of the data matrix. 

\item Let  $\hat{U}$, $\hat{A}\defeq [\hat{\alpha}_1,\ldots,\hat{\alpha}_{n}]$ be the solution of \eqref{pcr}. Learn a linear regressor with sparse coefficient, using $\hat{A}$ as the design matrix and by solving the following $\ell_1$ penalized problem
\begin{equation}\hat{w}=\arg\min_{w}\frac{1}{n}||Y-\hat{A}^\top w||_2^2+\lambda ||w||_1.
\label{lasso}
\end{equation}
\end{enumerate}
 We call the above two step procedure  MPCR\footnote{M in MPCR stands for missing}. Note that in Step 1 of MPCR, the label data is not used. A merit of MPCR over other approaches previously proposed for our problem is that MPCR explicitly utilizes the low-rank structure of the data, and a linear model for the regression task at hand. 

However, such multi-step algorithms that do not utilize the label information in all the steps are  inherently label-inefficient. First, such multi-step algorithms fail to exploit information about $U_{*}$ reflected by the labels. Second, the estimate $\hat{U}$ is one basis (of many potential bases) of the underlying subspace. Since we perform sparse regression on the subspace coefficients, the choice of basis matters. However, without label information, we have no way of knowing which basis rotation is best. Third, MPCR solves a harder problem than necessary. To see why, note that when $w_{*}$ is sparse, then for the purpose of prediction, only those rows of $A_{*}$ and columns of $U_*$ that correspond
to the non-zero coordinates of $w_{*}$ matter. % This means that even though for reconstructing our missing data matrix, $U_{*}$ might have to be estimated
% accurately, for the purpose of regression we only need to estimate relevant columns
% of $U_{*}$ well. 

In general, any multi-step procedure that does not utilize label information when estimating the underlying subspace will be label-inefficient for learning a good predictor. This is particularly true when both $D$ and $d$ are of the same order, as in the sensor network problems described in Section~\ref{sec:model}.  This is because when $d$ is comparable to $D$, there is a good deal of information in the labels that can be used to efficiently estimate the underlying subspace. We observe this in our experiments too, where on the CT slice dataset, where $D=384,d=181$, MPCR gives substantially worse performance than our proposed algorithm.

Armed with these insights, we are interested in procedures that utilize label information fully. We do this by proposing a joint optimization procedure that  
simultaneously learns all the relevant variables in our model. %  In this way
% we hope to exploit label information, and the sparsity of $w$ more efficiently than a multi-stage
% procedure that reconstructs the missing data matrix, and then
% learns a regressor.
% Joint optimization formulation
% Stochastic optimization approach to solving our problem
% Discuss the highlights of our algorithm- Need for orthonormality,
% Using PETRELS type algorithm that uses second order information,  
% using proximal operators that enforce sparsity specifically,
% alternating minimization approach, not any-time since we are solving
% non-convex problem.
\subsection{Learning Via Joint Optimization}
Given constants $\lambda_1, \lambda_2,\lambda_3>0$, we propose to solve the following optimization problem.
\begin{equation}
\begin{aligned}
\label{eqn:mainopt}
  \underset{U,A,w} {\text{minimize}}&~
  \frac{\lambda_1}{n}\sum_{i=1}^n||P_{\Omega_i}(x_i)-P_{\Omega_i}(U\alpha_i)||_2^2+
  &~\frac{1}{n}
  ||Y-A^\top w||_2^2+\lambda_2||w||_1 +\lambda_3||w||_2^2\\
  \text{subject to} &~~U^\top U=I_d,
\end{aligned}
\end{equation}
where $U\in \mathbb{R}^{D\times d},A=[\alpha_1,\ldots,\alpha_n]\in \mathbb{R}^{d\times n},w\in
\mathbb{R}^d$. Like in MPCR, the first term corresponds to a matrix
completion term. The second term in the
above optimization formulation measures the squared loss of a regressor
$w$ on a low-dimensional representation of the training data.  The third
term in our optimization formulation is the $\ell_1$ norm penalty which encourages sparse $w$. Finally the last term is motivated by elastic net type formulation for sparse prediction. 
%Need for orthonormality
We optimize over $U,w,A$, under the constraints that the columns of $U$ be orthonormal to each other to ensure uniqueness of the solution. % To see why we impose this constraint, notice that if we consider a possible solution $(U,A,w)$ to the above optimization problem, then a better solution is $(\alpha U, \frac{1}{\alpha} A,\alpha w)$, where $\alpha \leq 1$. Hence in order to get around this redundancy of multiple solutions, we enforce $U^TU=I_d$.
The above optimization procedure outputs $\what,\Uhat,\Ahat$. Given an unlabeled data point with missing entries, $(P_\Omega(x),\Omega)$, and a constant $\gamma\in [0,1)$, we first project the point onto the subspace spanned by the columns of the matrix $\Uhat_{\Omega}$, to obtain 
$\tilde{x}=(\Uhatomega^\top\Uhatomega)^{-1}\Uhatomega^\top\xomega$. Our regressor, $\fhat_{\what,\Uhat,\gamma}$ then predicts the label of $(P_{\Omega}(x),\Omega)$ as 
\begin{equation}
  \label{eqn:freg}
  \fhat_{\what,\Uhat,\gamma}(P_{\Omega}(x),\Omega)=
    \what^\top\tilde{x} \ones_{\{||(\Uhatomega^\top\Uhatomega)^{-1}||_2 \leq \frac{D}{m(1-\gamma)}\}},
\end{equation}
where $\ones_{\{\cdot\}}$ is the indicator function.
Whenever $||(\Uhatomega^\top\Uhatomega)^{-1}||$ is large, it implies that the missing entries will not allow accurate subspace projection; in this case, our  method outputs $0$. 
% , our final regressor outputs a 0 because
% , is very large we shall call this as
% default prediction. 
A good choice of $\gamma$ depends on $d$ and $|\Omega|$, and we shall discuss this in detail in Section~\ref{sec:generr}.
% Solving via Stochastic optimization.

\subsection{Solving The Optimization Problem}
The optimization problem shown in ~\eqref{eqn:mainopt} is individually convex in the optimization variables $U,A,w$, but jointly non-convex. We solve this problem via an alternating minimization approach, where we minimize over $U,A,w$ alternatively. In addition, we adopt a stochastic optimization approach. Our algorithm is called Sparse Linear Regression with Missing data (SLRM). SLRM makes a pass over the dataset, and each time uses a single data point $(P_{\Omega_t}(x_t),y_t,\Omega_t)$ to make updates to all the parameters. SLRM uses stochastic second order steps to update matrices $U,A$, and stochastic first order steps to update $w$ vector. Algorithm~\ref{alg:slrm} provides a pseudocode of our
proposed stochastic optimization algorithm. There are six main steps,
which we shall discuss below in detail. 

\textbf{Initialization.} In Step 1 we initialize $U, w$ to $\hat{U}_0,\hat{w}_0$. $\hat{U}_0$ is obtained by performing SVD of the incomplete data matrix with 0's filled in the missing entries. The left singular vectors corresponding to the top $d$ singular values form the $\hat{U}_0$ matrix. Similar initialization techniques have been proposed in matrix completion literature~\citep{jain2013low,hardt2013understanding,koren2009matrix}. We initialize $\hat{A}_0$ by projecting each $P_{\Omega_i}(x_i)$ onto the subspace spanned by $\hat{U}_0$. We initialize $\hat{w}_0$ by solving the LASSO regression on $Y$ and $\hat{A}_0$, similar to \eqref{lasso}.
%  and the observed data matrix.
% solve a least squares problem over the zero filled data matrix \todo{why zero filled? why notto obtain a weight vector $v$. We then set $w_0=U_0^Tv$. 
 We initialize matrices, $R^{0}_1,R_2^{0},\ldots,R_D^{0}$ to a multiple of the identity matrix. These matrices are required in Step 6 of our algorithm.

\textbf{Updating $A$}. In round $t$, SLRM uses $(P_{\Omega_t}(x_t),y_t,\Omega_t)$ to update our estimate of the $A$ matrix. Since $(P_{\Omega_t}(x_t),y_t,\Omega_t)$ is only responsible for the $t^{\text{th}}$ column of matrix $A$, in Step 5 of SLRM we replace the $t^{\text{th}}$ column of $\hat{A}_{t-1}$, with $\hat{\alpha}_t$, to obtain $\hat{A}_t$. This update reduces to a simple unconstrained quadratic optimization problem over $\alpha_t$, which can be solved in closed form by solving a system of linear equations.
% FUNDA ABOUT PETRELS
\newline \newline
\textbf{Updating $U$}. In Step 6, we update $\hat{U}_{t-1}$ by using
the MODIFIED-PETRELS (MP) routine. The MP routine is inspired by  the
PETRELS algorithm~\citep{chi2012petrels}, which was designed for
estimating subspaces from streaming data with missing entries. PETRELS can be seen as solving the optimization problem
$\underset{U}\arg\min\sum_{i=1}^n f_i(U)$, where
$f_i(U)=||P_{\Omega_i}(x_i-U\alpha_i)||^2$, and $\alpha$'s correspond to a projection of the observations onto the current subspace estimate. MP solves the same optimization problem, but with the $\alpha_i$ from Step 5, {\em which uses label information}. 
Both methods update $\hat{U}_{t-1}$ to $\tilde{U}_{t}$ by performing a single stochastic Newton step on $f_t(U)$, starting at $\hat{U}_{t-1}$, and using $P_{\Omega_t}(x_t),\Omega_t,\alpha_t$. This Newton step can be implemented efficiently using recursive least squares, and a pseudocode for the MP routine is available in Algorithm 2.

\textbf{Orthonormalization of updated $U$.} Since we are optimizing
over the manifold of rectangular matrices with orthonormal columns, we
perform an orthonormalization step in Step 9, by solving the following nearest orthogonal matrix problem: $U_t=\underset{U} \arg\min||U-\tilde{U}_{t}||_F$ subject to $U^\top U=I_d$. This problem has the closed form solution as shown in Step 7 of SLRM. Note that by construction, our orthonormalization step always guarantees, that
the columns of $\hat{U}_{t}$ always span a $d$-dimensional subspace of $\mathbb{R}^D$.
\newline \newline
\textbf{Updating $w$}. In Step 8, we perform one step of the stochastic projected gradient algorithm w.r.t.\ $w$. Our objective function is $\underbrace{\frac{1}{n}||Y-\hat{A}_t^\top w||_2^2+\lambda_3||w||_2^2}_{F(w)}+\lambda_2||w||_1$ . A step of the stochastic projected gradient method requires us to calculate a noisy estimate of, $\nabla F(w_{t-1})$, using $\alpha_t,y_t$, followed by an application of the prox operator corresponding to $||w||_1$. 
\newline\newline  
\textbf{Validation Steps.} We let $\hat{w}$ and $\hat{U}$ denote the estimate stored at the end of the previous round. In Steps 9-12, we determine, using a hold-out validation set, whether $\hat{w}$ and $\hat{U}$ form a better regressor than $\hat{w}_t$ and $\hat{U}_t$. The pair that achieves smaller hold-out error is then stored as $\hat{w}$ and $\hat{U}$ for the next round.  

% In steps 9-12, we construct the regressor
% given in Equation~\ref{eqn:freg} using $\hat{w}_t,\hat{U}_t,\gamma$. This
% regressor is then evaluated on a held-out validation set. If the
% validation error of the regressor is better than the validation error
% of the 
% current best regressor then $\what,\Uhat$ are updated to $w_\tau,
% U_\tau$ respectively.
These steps are required since we are solving a  non-convex optimization problem, and hence it is not necessarily true that $(\hat{w}_T,\hat{U}_T)$ leads to the best regressor. 

Note that SLRM can easily be modified to handle the case where we have semi-supervised data. If we get unlabeled data in a round $t$, then we perform the optimization problem in Step 5 of the SLRM algorithm without the term $(y_t-\hat{w}_{t-1}^\top\alpha)^2$, and simply skip the weight update in Step 8.

\subsection{Computational Complexity and Convergence}  Step 5 of SLRM solves a system of linear equations and takes $O(|\Omega_t|d^2)$ time. Step 6 of SLRM allows a parallel implementation, where the rows of the matrix $\tilde{U}_{t}$ are updated in parallel. This takes $O(|\Omega_t| d^2)$ time. Finally, Step 7 of SLRM is the classical orthogonal Procrustes problem and takes $O(Dd^2+d^3)$ time. Hence, all together the time complexity of our algorithm is $O(|\Omega_t|d^2+Dd^2)$. In particular, since steps 5,7 are well studied numerical problems, they have efficient numerical implementations available. Since, our algorithm is built on exploiting the low rank structure of the missing data matrix, we attempt to get a rough estimate of the subspace spanned by the data.  Algorithms that attempt to estimate the subspace spanned by missing data such as GROUSE~\citep{balzano2010online}, PAST~\citep{yang1995projection} need to expend $O(|\Omega_t|d^2)$ computation. Hence, it appears at least $O(|\Omega_t|d^2)$ amount of computation is inevitable. The overall higher computational complexity of our SLRM over algorithms such as PAST, GROUSE etc. is because of the additional prediction task that we aim to solve with SLRM.

SLRM, like other task driven dictionary learning approaches~\citep{mairal2012task}, uses a combination of stochastic updates and alternating minimization for a non-convex objective function. Empirically we observe similar convergence behavior to that reported in~\citep{mairal2012task}; to the best of our knowledge, no formal convergence guarantees are available for SGD based approaches to the task driven dictionary learning problem~\citep{mairal2012task}. Note that~\citet{mairal2009online} also uses stochastic updates and alternating minimization for a biconvex objective and describes associated convergence analysis; however, the problem considered in that paper is far simpler than ours, in that it was not task-driven and didn't handle missing data.
\newline
\begin{algorithm*}
  \label{alg:slrm}
  \caption{SLRM. Input: Parameters
    $\lambda_1,\lambda_2,\lambda_3, \delta >
    0, 0\leq \gamma<1$, Output: $\what,\Uhat$}
  \begin{algorithmic}[1]
    \STATE Initialize $\what=\hat{w}_0, \Uhat=\hat{U}_0,\hat{A}=\hat{A}_0$, $(R_1^0)^{\dagger}=\delta I_d, (R_2^0)^{\dagger}=\delta I_d,\ldots,  (R_D^0)^{\dagger}=\delta I_d$.
    \STATE Initialize $curr\_best\_val\_err=\infty$.
    %\FOR{$\text{iter}=1\ldots \text{MAXITER}$}
    \FOR {$t=1,2,\ldots n$}
    \STATE Receive $(P_{\Omega_t}(x_t),y_t,\Omega_t)$
    %\STATE $\tau\leftarrow (\text{iter}-1)*\text{MAXITER}+t$
    \STATE Replace $t^{\text{th}}$ column of $\hat{A}_{t-1}$ with $\hat{\alpha}_t$ to get $\hat{A}_t$. $\hat{\alpha}_t$ is given by, 
    \begin{align*}
      \hat{\alpha}_t&=\arg\min_{\alpha} \lambda_1
      ||P_{\Omega_t}(x_t)-P_{\Omega_t}(\hat{U}_{t-1}\alpha)||_2^2+
      (y_t-\hat{w}_{t-1}^\top\alpha)^2
    \end{align*}
    \STATE Update $U$, using the current sample $(P_{\Omega_t}(x_t),\Omega_t)$, as follows
    \begin{equation*}
      \tilde{U}_t,R_{1}^t,R_2^t,\ldots,R_D^t\leftarrow \text{Modified-PETRELS}(\hat{U}_{t-1},P_{\Omega_t}(x_t),\Omega_t,\hat{\alpha}_{t},(R_{1}^{t-1})^{\dagger},\ldots,(R_{D}^{t-1})^{\dagger})
    \end{equation*}
    \STATE Orthonormalize by, $\hat{U}_t\leftarrow \tilde{U}_t (\tilde{U}_t^\top\tilde{U}_t)^{-1/2}$
    \STATE Perform stochastic proximal gradient type update using the following equations
    \begin{align*} \hat{w}_{t}&=\text{prox}_{\eta_t\lambda_2,||\cdot||_1}\left[\hat{w}_{t-1}-\eta_t\Bigl(2(\hat{\alpha}_t\hat{\alpha}_t^\top\hat{w}_{t-1}-y_t\hat{\alpha}_t)+\lambda_3\hat{w}_{t-1}\Bigr)\right]
\end{align*}
\STATE $\text{val\_err}=$Validation-Error($\hat{w}_t,\hat{U}_t,\gamma$)
\IF{$val\_err<curr\_best\_val\_err$}
\STATE $\text{curr\_best\_val\_err}=\text{val\_err}$.
\STATE $\what\leftarrow \hat{w}_t,\Uhat\leftarrow \hat{U}_t$
\ENDIF
\ENDFOR
%\ENDFOR
\end{algorithmic}
\end{algorithm*}
%%%%%%%%%%%%%%%%% MODIFIED PETRELS DESCRIPTION BEGINS HERE%%%%%%%%%%%%%%%%%
\section{Generalization Error Bounds}
\label{sec:generr}
\begin{defn}
  Given a $\gamma\in [0,1)$, let 
  \begin{equation}
  \label{eqn:model_regressor}
  f_{w,U}(x_{\Omega},\Omega)=
    w^\top(\Uomega^\top\Uomega)^{-1}\Uomega^\top\xomega \ones_{\{||(\Uomega^\top\Uomega)^{-1}||_2 \leq \frac{D}{m(1-\gamma)}\}},
\end{equation}
and
  \begin{equation*}
  \begin{split}
  \cF_{\gamma}\defeq\{f_{w,U}| w\in \bbR^d, U\in \bbR^{D\times d}, ||w||_1\leq R_1, U^\top U=I_d, 
  f_{w,U} ~\text{is as given in Equation} ~\ref{eqn:model_regressor}\}.
  \end{split}
\end{equation*}
%%%%%%%%%%%
\end{defn}
Our main theorem is as follows
\begin{thm}
  \label{thm:main}
Consider a regression problem where a training set of $n$ data samples $(P_{\Omega_i}(x_i),y_i,\Omega_i)$ are
sampled i.i.d. from a probability distribution, with $|y_i|\leq B_Y$,
$||x_i||_{\infty}\leq B_X$, almost surely. Let each $\Omega_i$ be a set of cardinality
$m$, chosen uniformly at random with
replacement from the set $\{1,2,\ldots,D\}$. Let $b\defeq
2(B_Y+\frac{DR_1}{m(1-\gamma)})^2$. Choose a $\gamma\in [0,1)$. Let $\Lhat(f)\defeq\frac{1}{n}\sum_{i=1}^n (y_i-f(P_{\Omega_i}(x_i),\Omega_i))^2, L(f)\defeq\bbE_{x,y,\Omega} (y-f(P_{\Omega}(x),\Omega))^2$
Then for any $\delta >0$, and a universal constant $K>0$, we have with probability at least $1-\delta$, over a random sample of size $n$, for all $f\in \cF_{\gamma}$,
\begin{equation}
   L(f)\leq \Lhat(f) +K\Biggl[\sqrt{\Lhat(f)}\Biggl( \left(\frac{m}{n}+\frac{1}{\sqrt{n}}\right)\frac{DR_1B_X}{1-\gamma}
   +\sqrt{\frac{b\log(1/\delta)}{n}}\Biggr)+\frac{b\log(1/\delta)}{n}
   + \log^3(n)\left(\frac{m}{n}+\frac{1}{\sqrt{n}}\right)^2\left(\frac{DR_1B_X}{1-\gamma}\right)^2\Biggr]
\end{equation} 
\end{thm}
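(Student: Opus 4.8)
The inequality is an ``optimistic'' (localized) generalization bound for the squared loss over $\cF_\gamma$, of the shape $L(f)\lesssim\Lhat(f)+\sqrt{\Lhat(f)}\,\rho_n+\rho_n^2$ with $\rho_n$ a complexity term. I would prove it from two ingredients: a generic conversion result for nonnegative smooth losses that trades the uniform deviation for the Rademacher complexity of the hypothesis class, and a direct bound on $\mathcal{R}_n(\cF_\gamma)$.

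\textbf{Ingredient 1 (reduction).} Here the role of the indicator in \eqref{eqn:model_regressor} becomes clear: on the event $\{\|(\Uomega^\top\Uomega)^{-1}\|_2\le\frac{D}{m(1-\gamma)}\}$ one has $|f_{w,U}(x_\Omega,\Omega)|\le\|w\|_1\,\|(\Uomega^\top\Uomega)^{-1}\Uomega^\top\xomega\|_\infty\lesssim\frac{DR_1B_X}{m(1-\gamma)}$, using $\|w\|_1\le R_1$, $\|x\|_\infty\le B_X$, $\|\Uomega\|_2\le 1$, and the conditioning bound; off this event $f\equiv 0$. Hence $|y-f|\le B_Y+\frac{DR_1B_X}{m(1-\gamma)}$ and the per-example squared loss is bounded almost surely by $b$. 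The squared loss is also nonnegative and $2$-smooth as a function of the prediction. Feeding these three facts (loss bound $b$, smoothness $O(1)$, and the Rademacher bound below) into an optimistic-rate theorem for smooth nonnegative losses — the Srebro--Sridharan--Tewari bound, or, self-contained, a Bernstein inequality exploiting $\operatorname{Var}((y-f)^2)\le b\,L(f)$ together with peeling over $O(\log n)$ risk shells and a chaining bound on each shell — produces an inequality of exactly the stated form, with $\mathcal{R}_n(\cF_\gamma)$ in place of $(\frac{m}{n}+\frac{1}{\sqrt n})\frac{DR_1B_X}{1-\gamma}$ and $\log^3 n$ absorbing the peeling/chaining overhead.

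\textbf{Ingredient 2 (Rademacher complexity).} It remains to show $\mathcal{R}_n(\cF_\gamma)\lesssim(\frac{m}{n}+\frac{1}{\sqrt n})\frac{DR_1B_X}{1-\gamma}$. First peel off $w$: since each $f_{w,U}$ is linear in $w$, $\ell_1$--$\ell_\infty$ duality gives $\mathcal{R}_n(\cF_\gamma)=R_1\,\bbE_{S,\sigma}\sup_{U^\top U=I_d}\|\frac{1}{n}\sum_i\sigma_i\,\hat x_i(U)\|_\infty$, where $\hat x_i(U):=(U_{\Omega_i}^\top U_{\Omega_i})^{-1}U_{\Omega_i}^\top x_{i,\Omega_i}\ones_{E_i}$. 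On $E_i$, decompose $\hat x_i(U)=\frac{D}{m}U_{\Omega_i}^\top x_{i,\Omega_i}+\big[(U_{\Omega_i}^\top U_{\Omega_i})^{-1}-\frac{D}{m}I_d\big]U_{\Omega_i}^\top x_{i,\Omega_i}$. Re-inserting $w$, the first (``leading'') piece is $\frac{D}{m}w^\top U_{\Omega_i}^\top x_{i,\Omega_i}=\frac{D}{m}v^\top(\text{masked }x_i)$ with $v=Uw$, $\|v\|_2=\|w\|_2\le R_1$; because it no longer depends on $\Omega_i$ through an inverse it collapses to a single bounded-$\ell_2$ linear functional, and a vector-Khintchine computation bounds its contribution by $\frac{D}{m}R_1\,\bbE\|\frac{1}{n}\sum_i\sigma_i(\text{masked }x_i)\|_2\lesssim\frac{DR_1B_X}{\sqrt{mn}}\le\frac{DR_1B_X}{\sqrt n}$. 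The second (``correction'') piece is where $\frac{1}{1-\gamma}$ and $\frac{m}{n}$ enter: on $E_i$ it is bounded a.s.\ by $O(\frac{DR_1B_X}{\sqrt m(1-\gamma)})$, and, being nonlinear in $U$, its Rademacher complexity is controlled by a Dudley/chaining bound over the Stiefel manifold $\{U:U^\top U=I_d\}$, whose metric entropy is $O(Dd\log(\cdot))$ and along which $\hat x_i(\cdot)$ is Lipschitz with constant governed (again by the conditioning bound) by $\frac{D}{m(1-\gamma)}$; this produces the $\frac{m}{n}$ term and the logarithmic factors that feed $\log^3 n$.

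\textbf{Main obstacle.} The indicator $\ones_{E_i}$ is discontinuous in $U$, so the chaining step over the Stiefel manifold cannot be run naively: near the threshold $\|(\Uomega^\top\Uomega)^{-1}\|_2=\frac{D}{m(1-\gamma)}$ an arbitrarily small change in $U$ can switch $f$ on or off, breaking the Lipschitz estimate on the increments. I would handle this by showing that for the randomly drawn $\Omega_i$ the quantity $\|(U_{\Omega_i}^\top U_{\Omega_i})^{-1}\|_2$ concentrates (its mean is $\asymp\frac{D}{m}$, comfortably below the threshold), so that on a high-probability event the indicator is locally constant in $U$ at the resolution required by the net; alternatively, replace the hard indicator by a Lipschitz surrogate and bound the discrepancy. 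The remaining work is bookkeeping — keeping the dimension dependence tight enough that the correction term contributes $\frac{DR_1B_X}{1-\gamma}$ rather than something carrying extra powers of $D$ or $d$, and matching the constants and logarithmic factors so as to land exactly on the stated right-hand side.
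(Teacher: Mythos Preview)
Your Ingredient~1 is exactly the paper's: both invoke the Srebro--Sridharan--Tewari optimistic-rate bound for smooth nonnegative bounded losses, with the indicator in \eqref{eqn:model_regressor} supplying the a.s.\ bound $b$ on the squared loss.

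Your Ingredient~2 is a genuinely different, and considerably harder, route to the Rademacher bound. The paper never chains over the Stiefel manifold. Instead it makes one observation that collapses the problem: for any $(w,U)$ and any fixed $\Omega$, the prediction $f_{w,U}(x_\Omega,\Omega)$ is of the form $\mu^\top x_\Omega$ with $\|\mu\|_2\le B:=\tfrac{DR_1}{m(1-\gamma)}$ (on the indicator event take $\mu=U_\Omega(U_\Omega^\top U_\Omega)^{-1}w$ and use $\|\mu\|_2\le\|(U_\Omega^\top U_\Omega)^{-1}\|_2\|w\|_2\le B$; off the event $\mu=0$). Hence $\cF_\gamma$ sits inside the linear class $\cF_g=\{(x,\Omega)\mapsto\beta^\top x_\Omega:\|\beta\|_2\le B\}$, and it suffices to bound $\cR_n(\cF_g)$. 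The paper does this by the most elementary covering argument --- cover the $m$-dimensional $\ell_2$-ball of radius $B$ at scale $\epsilon/(\sqrt{m}B_X)$, so $\log\cN_2\lesssim m\log(B\sqrt{m}B_X/\epsilon)$ --- and a Dudley integral; the $(m/n+1/\sqrt{n})$ shape drops out of that single integral. This buys two things you have to work for. First, your ``main obstacle'' evaporates: once you parameterize by $\mu$, the indicator's only role is to enforce $\|\mu\|_2\le B$, after which the class is a ball with no threshold and no discontinuity. Second, there is no supremum over $U$ left, hence no Stiefel entropy and no danger of extra $d$- or $D$-dependence; by contrast, your correction term, chained over a manifold of metric entropy $O(Dd\log(\cdot))$ with sup-norm $O\!\big(\tfrac{DR_1B_X}{\sqrt{m}(1-\gamma)}\big)$, does not obviously land on the clean $\tfrac{m}{n}\cdot\tfrac{DR_1B_X}{1-\gamma}$ rate you want. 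Your decomposition into a leading $v=Uw$ piece plus a correction is not wrong, but the paper's point is that the \emph{entire} map $x_\Omega\mapsto f_{w,U}(x_\Omega,\Omega)$ is already a bounded-coefficient linear functional of $x_\Omega$, so no decomposition is needed.
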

For appropriate values of $\lambda_1,\lambda_2,\lambda_3,R_1$,  the output of SLRM $\fhat_{\what,\Uhat,\gamma} \in \cF_{\gamma}$. 
The complete proof is in the appendix. Here, we shall provide a brief synopsis of the proof.
\begin{algorithm}
  \label{alg:mod_petrels}
  \caption{MODIFIED-PETRELS. Input: $\hat{U}_{t-1},P_{\Omega_t}(x_t), \Omega_t,\hat{\alpha}_t,R_1^{t-1},\ldots,R_D^{t-1}$. Output: $\tilde{U}_{t},R_1^{t},\ldots,R_D^t$}
  \begin{algorithmic}[1]
    \FOR{$j=1,\ldots, D$}
    \STATE $\beta_j^t=1+\hat{\alpha}_t^\top(R^{t-1}_j)^{\dagger}\hat{\alpha}_t$
    \STATE $v_j^t=(R_j^{t-1})^{\dagger}\hat{\alpha}_t$
    \STATE $p_{j}^t=\mathbbm{1}[j\in\Omega_t]$
    \STATE $(R^{t}_{j})^{\dagger}=(R^{t-1}_{j})^{\dagger}-p_{j}^t(\beta^{t}_{j})^{-1}v^{t}_{j}(v^{t}_{j})^\top$
    \STATE $\tilde{U}_{t,j}=\hat{U}_{t-1,j}+p_{j}^{t}(x_{t,j}-\hat{\alpha}_t^\top\hat{U}_{t-1,j})(R^{t}_j)^{\dagger}\alpha_t$
    \ENDFOR
  \end{algorithmic}
\end{algorithm}
\begin{proof sketch}
Our proof uses standard large deviation results connecting $L(f)$ and $\Lhat(f)$, similar to \citep[Thm. 1]{srebro2010smoothness}). We upper bound the Rademacher complexity of the function class $\cF_{\gamma}$; Lemmas 2 and 3 in the appendix show how to perform these calculations. 
\end{proof sketch}
% We now would like to discuss the result of Theorem~\ref{thm:main}, by discussing the interaction between various terms involved in theorem~\ref{thm:main}.
%\subsection{Interaction between $\hat{L}$, and $\gamma$}
We would like to remark that in Theorem~\ref{thm:main} we assumed that $|\Omega_i|=m$ for all $i$. This assumption is only a technical convenience and allows us to state our result in the cleanest possible way. In general, we wish to choose $\gamma$ so that both (a) the empirical error $\hat{L}(\fest)$ is small and (b) the R.H.S. of the inequality in Theorem 1 (which scales like $(1-\gamma)^{-2}$) is small.  A similar trade-off can also be found in structural risk minimization, commonly studied in classical supervised learning, where we know that functions belonging to a richer class have smaller training error, but potentially larger upper bounds on their generalization error.

Specifically, the R.H.S. of the inequality in Theorem~\ref{thm:main} depends on a term of the form  $(\frac{DR_1M}{1-\gamma})^2$. This term can be roughly thought of as a measure of the complexity of the function class, $\cF_{\gamma}$. A large $\gamma$ would imply that we are learning from a richer class of functions and, as can be seen from Theorem~\ref{thm:main}, the upper bound on the risk of functions in the class $\cF_{\gamma}$ will be potentially larger. Thus we wish to keep $\gamma$ as small as possible. 
% Hence, there is a trade-off in choosing $\gamma$. On one hand, for large $\gamma$ we would expect smaller empirical error for functions in $\cF_{\gamma}$, on the other hand the complexity of the function class $\cF_{\gamma}$ is larger, and hence the upper bounds on the risk of functions in $\cF_{\gamma}$ are potentially larger

On the surface, it may appear that $\gamma$ must be close to one to yield a small empirical error (by not predicting zero values). However, in many settings, it is possible to choose a small value of $\gamma$ and still have a low empirical error. To see this, note that 
% \subsection{Influence of $m$, and choice of $\Omega$ on training and
%   expected loss}
% \label{subsec:inf_m}
%It is intuitively clear that
 larger $m \defeq |\Omega_i|$ leads to easier learning problems, and hence smaller error rates. Let $\hat{S}$ be the subspace spanned by the columns of
$\Uhat$. Let $\mu(\hat{S})\defeq\frac{D}{d}\max_j ||P_{\hat{S}} e_j||^2$, where
$P_{\hat{S}}$ is the projection operator onto $\hat{S}$, and $e_j$ is the standard basis element in $D$ dimensions. $\mu(\hat{S})$ is known as coherence of subspace $\hat{S}$~\citep{candes2009exact}. It is well known~\citep{balzano2010high} that
$||(\Uhatomega^\top\Uhatomega)^{-1}||\leq \frac{D}{m(1-\gamma_1)}$, with
probability at least $1-\delta$ over the random choice of $\Omega$,
where
$\gamma_1=\sqrt{\frac{8d\mu(\hat{S})\log(\frac{2d}{\delta})}{3m}}$. % Hence, if $m$ is large, then  $\gamma_1$ would be small. 
Hence, it is enough to set $\gamma\geq \sqrt{\frac{8d\mu(\hat{S})\log(\frac{2d}{\delta})}{3m}}$
From our previous discussions, we know that a small $\gamma$ would mean that the complexity of $\cF_{\gamma}$ is also small. To see how this affects
$\Lhat(\fhat_{\what,\Uhat,\gamma})$, notice that because with probability at least $1-\delta$, $||(\Uhatomega^\top\Uhatomega)^{-1}||\leq \frac{D}{m(1-\gamma)}$, we can claim that  on an expectation we are guaranteed
to make a non-zero prediction on less than a fraction $\delta$ of our
training examples. Hence by choosing a large $m$, we are guaranteed
that we can work on a sufficiently small function class $\cF_\gamma$,
and yet not incur a large training error.  This implies from Theorem~\ref{thm:main}, that $L(\fhat_{\what,\Uhat,\gamma})$ is small. Hence, the correct choice of $\gamma$ depends on $m$, and a suitable choice of $m$ depends on $d$. We now have a nice
interplay between the number of random measurements, $m$, the prediction error of the final regressor, the training error of the regressor, the ambient dimension $D$, and the intrinsic dimension $d$.

\section{Experimental Results}
\label{sec:expts}
%First we want to prove that taking into account label information is
%important, comp with PCR
\textbf{Experimental Setup.}
% Explain the setup
% w_vec generated randomly, U,A also generated randomly, 
% held out validation dataset, used to crossvalidate over 
% Report MSE over test data
% Number of passes over entire data
%D=100, d=30; p=0.1; s=10
%\textbf{Choice of $\eta_t$.} $\eta_t$ is the learning rate that is used
%in our stochastic projected gradient steps for optimizing $w$. A common choice of step size is
%to use a small constant step size, $\rho$, for a fixed number of rounds, and
%then to let it decay as $\rho/t$. This strategy has been advocated
%in~\cite{murata1998statistical,mairal2012task}, and we use this method
%in our algorithm. The value of $\rho$ was found by trying a range of
%$\rho$, and choosing the one that gave the best error rate over the
%held-out validation set.
We generated datasets of increasing size, with $D=100$ and $d=30$. These
datasets were generated by first generating a common $U$ matrix of size $D\times d$ with random, orthonormal columns. For a given dataset size, five different $A$ matrices of size $d\times n$ were generated by sampling each entry from a standard normal distribution.
Separate validation and test datasets were also generated by
generating additional random $A$ matrices, in the same way as described before. We generate random $\epsilon_X$ with each entry of matrix $\epsilon_X$ (resp. $\epsilon_Y$) having mean zero and variance $\sigma_x^2$ (resp. $\sigma_y^2$). In order to simulate missing data, we retain each element of each observed feature vector with probability $p$, and in the test and validation datasets with probability $q$. While in the theoretical results, for ease of analysis we assumed that the set $\Omega$, is chosen uniformly at random, with replacement, from the set $\{1,2,\ldots,D\}$, for our practical implementations, we choose $\Omega$ of size $m$ by choosing each feature with probability $p$. Hence, $\bbE m\approx pD$. Previous analyses have shown that these two sampling strategies behave similarly~\citep{recht2011simpler}.  All the results reported here are averaged over the 
five different random datasets that we generated. Similarly, $w$ vector used in our model was generated at
random from a Gaussian distribution, and random coordinates of $w$
were set to 0. The sparsity level of $w$ was set to $10$. $\gamma$ is set to $0.001$. $\lambda_1,\lambda_2,\lambda_3$ are chosen by using a held-out validation set, and searching for parameter values which give the smallest MSE. We found that the performance of SLRM is not very sensitive to the values of $\lambda$'s, and hence a coarse range is enough during validation. $d$ is set by performing PCA on a subset of the data, and calculating how many dimensions are required to capture about $99\%$ of the variance

We compared our algorithm with a stochastic version of MPCR (PCR modified to handle missing data, detailed in Section~\ref{sec:learning_alg}), which uses the PETRELS algorithm to perform Step 1 of MPCR, and then follows it by a stochastic projected gradient method to solve the LASSO problem in Step 2 of MPCR. We shall call this stochastic implementation SMPCR.  

For both SLRM and SMPCR, we allow multiple passes over the dataset in our experiments. The maximum number of passes is fixed to 500 for both SLRM and SMPCR. $\eta_t$ used in Step 8 of SLRM is chosen to be a constant, $\rho$, for a fixed number of rounds, and then allowed to decay as $\rho/t$. This strategy has also been used advocated in~\citep{murata1998statistical,mairal2012task}, and we use this method in our algorithm. The value of $\rho$ was found by trying a range of $\rho$, and choosing the one that gave the best error rate over the hold-out dataset.  

{\bf Experiments in the noiseless setting.} In our first set of experiments, we set $\sigma_x=\sigma_y=0$.  Figure~\ref{fig:syn_no} shows the error bars for the mean squared error (MSE) on the test dataset for both SLRM (in solid, red line), and SMPCR (in broken, blue line). As we can see from the figure, the performance of both SMPCR and SLRM improves with increasing $n$. Figure~\ref{fig:syn_no} also indicates that the average MSE of SLRM is lower than that of SMPCR for all $n$. 
 \begin{figure*}[t]
  \centering
  \subfigure[Noiseless\label{fig:syn_no}]{\includegraphics[scale=0.20]{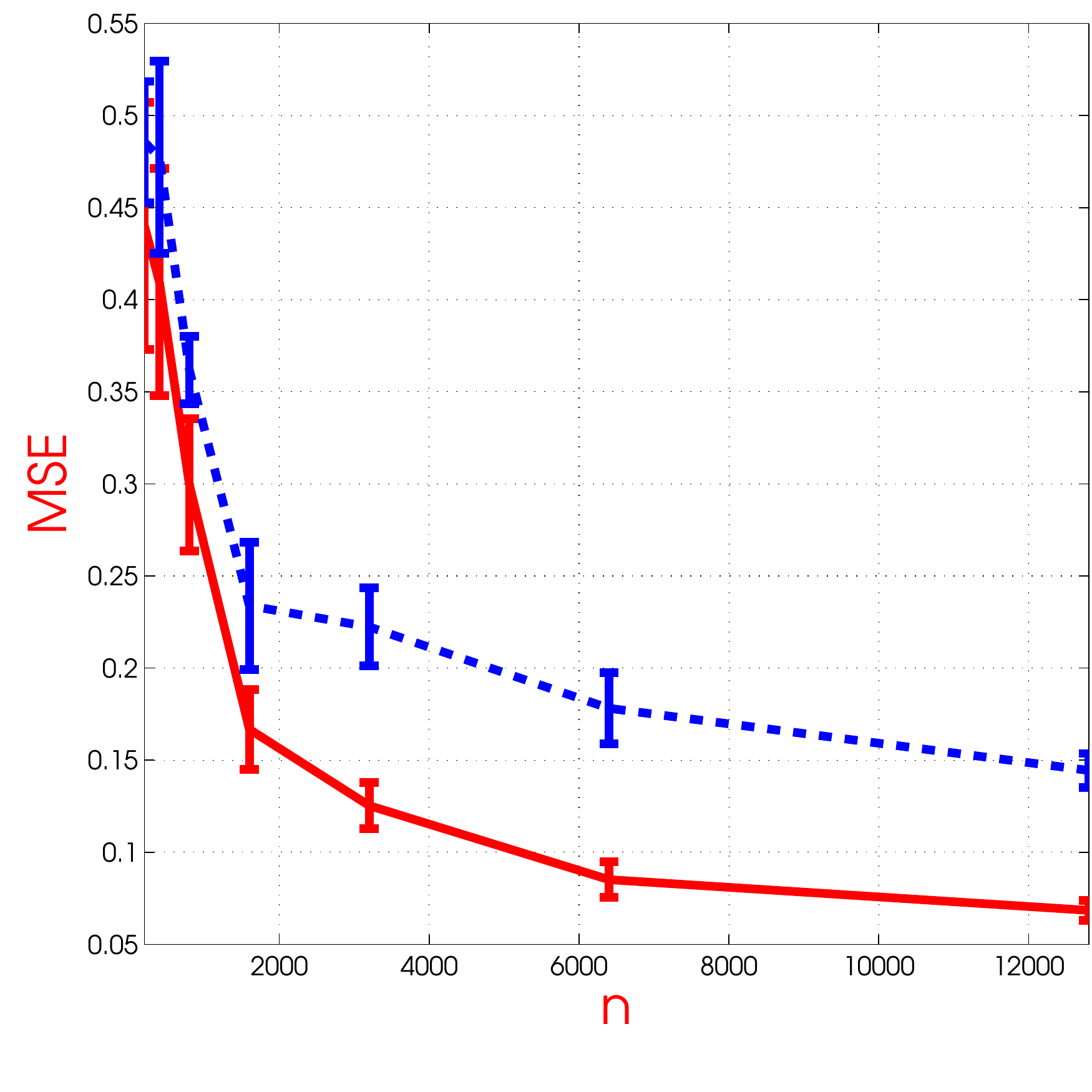}}
  \subfigure[Non-zero $\sigma_x^2; \sigma_y^2 = 0$\label{fig:syn_feat}]{\includegraphics[scale=0.20]{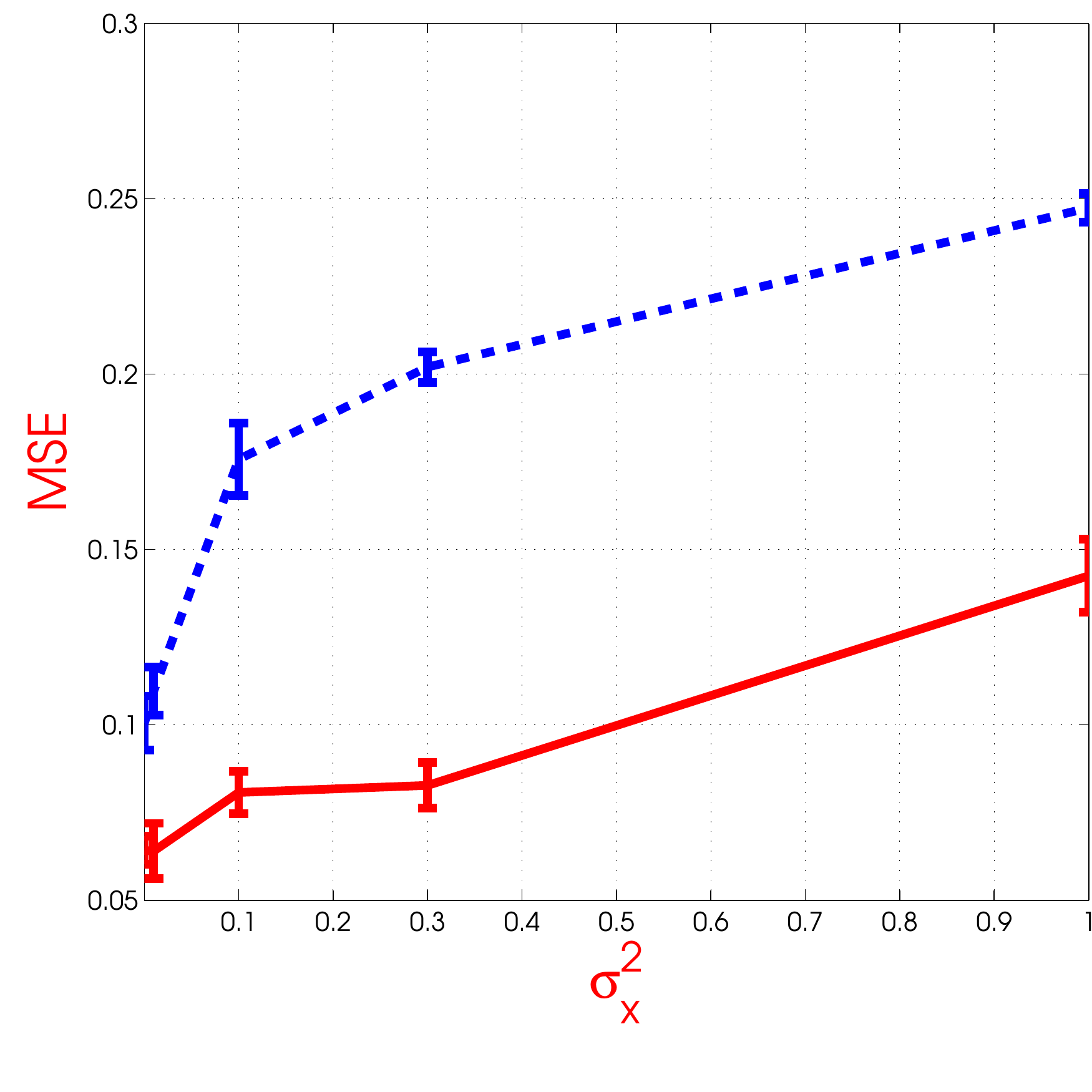}}
 \subfigure[Non-zero $\sigma_x^2,\sigma_y^2$\label{fig:syn_feat_lab}]{\includegraphics[scale=0.20]{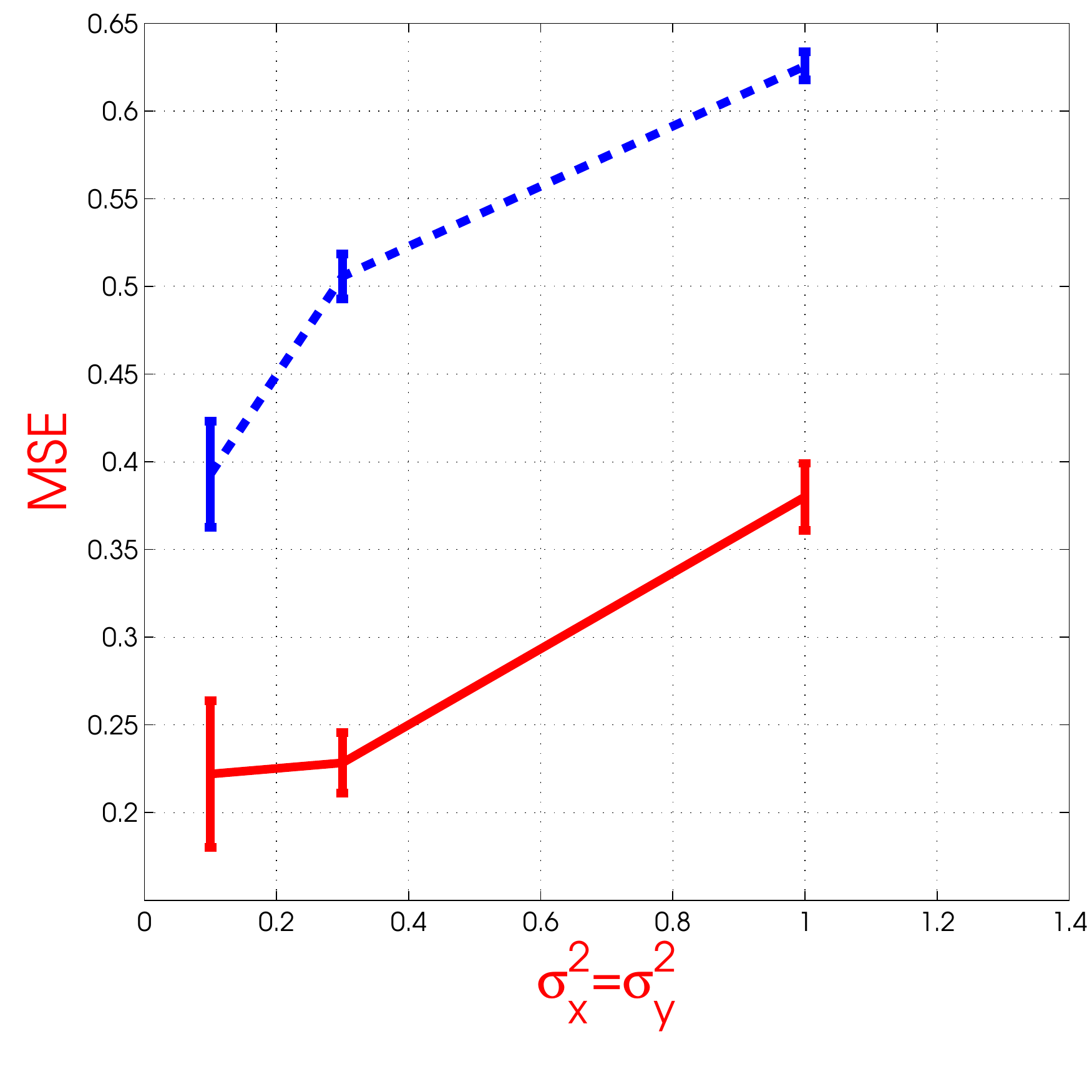}}
 \subfigure[Impact of $p$, probability of feature being observed (not missing)\label{fig:syn_p}]{\includegraphics[scale=0.20]{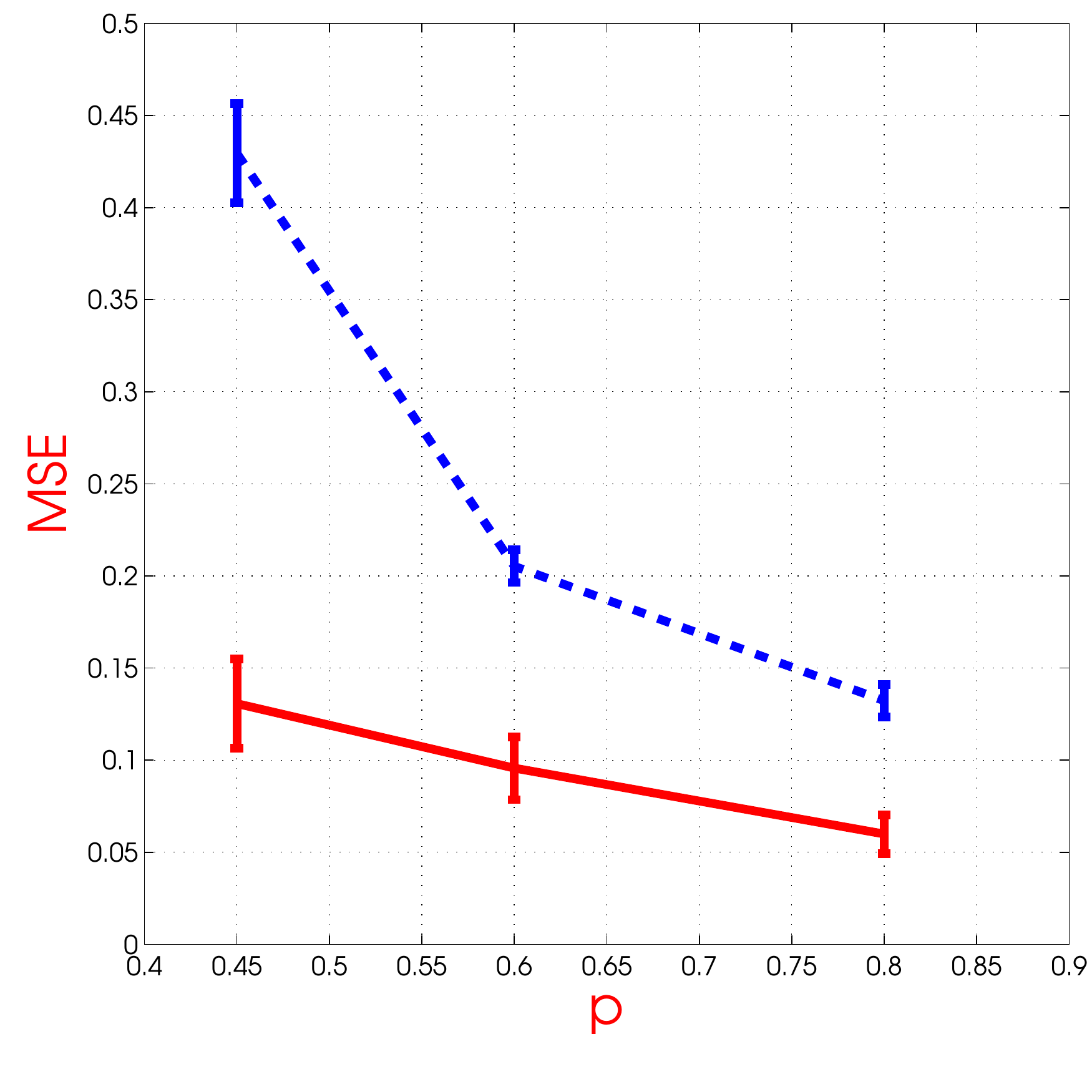}} 
\caption{Comparison between SLRM (solid, red line) and SMPCR (broken, blue line) on synthetic datasets for $D = 100$ and $d = 30$.\label{fig:synthetic}}
\end{figure*}

\begin{figure*}[t]
  \centering
  \subfigure[Real datasets\label{fig:real}]{\includegraphics[scale=0.25]{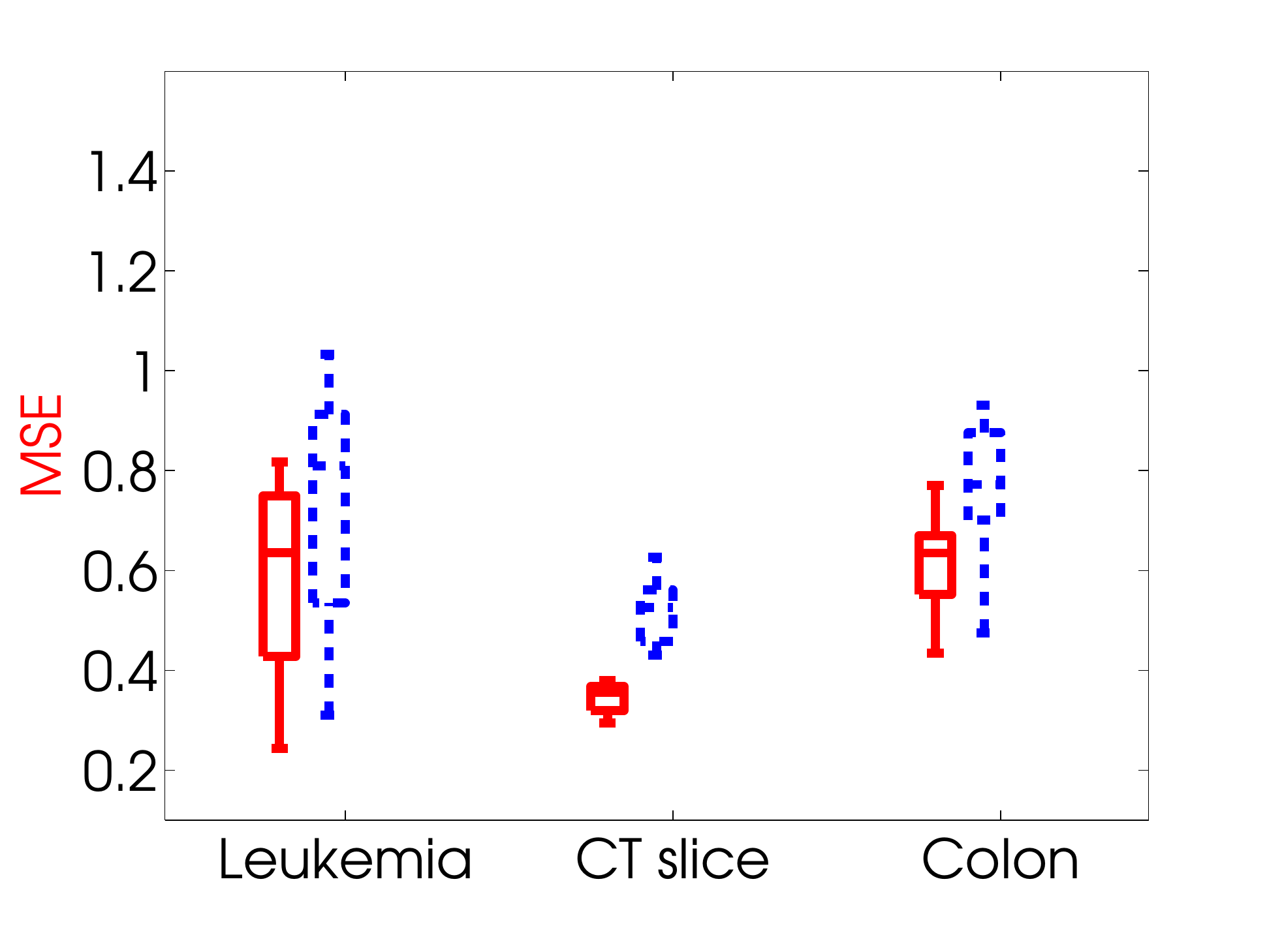}}
  \subfigure[ATP1d\label{fig:atp1d}]{\includegraphics[scale=0.25]{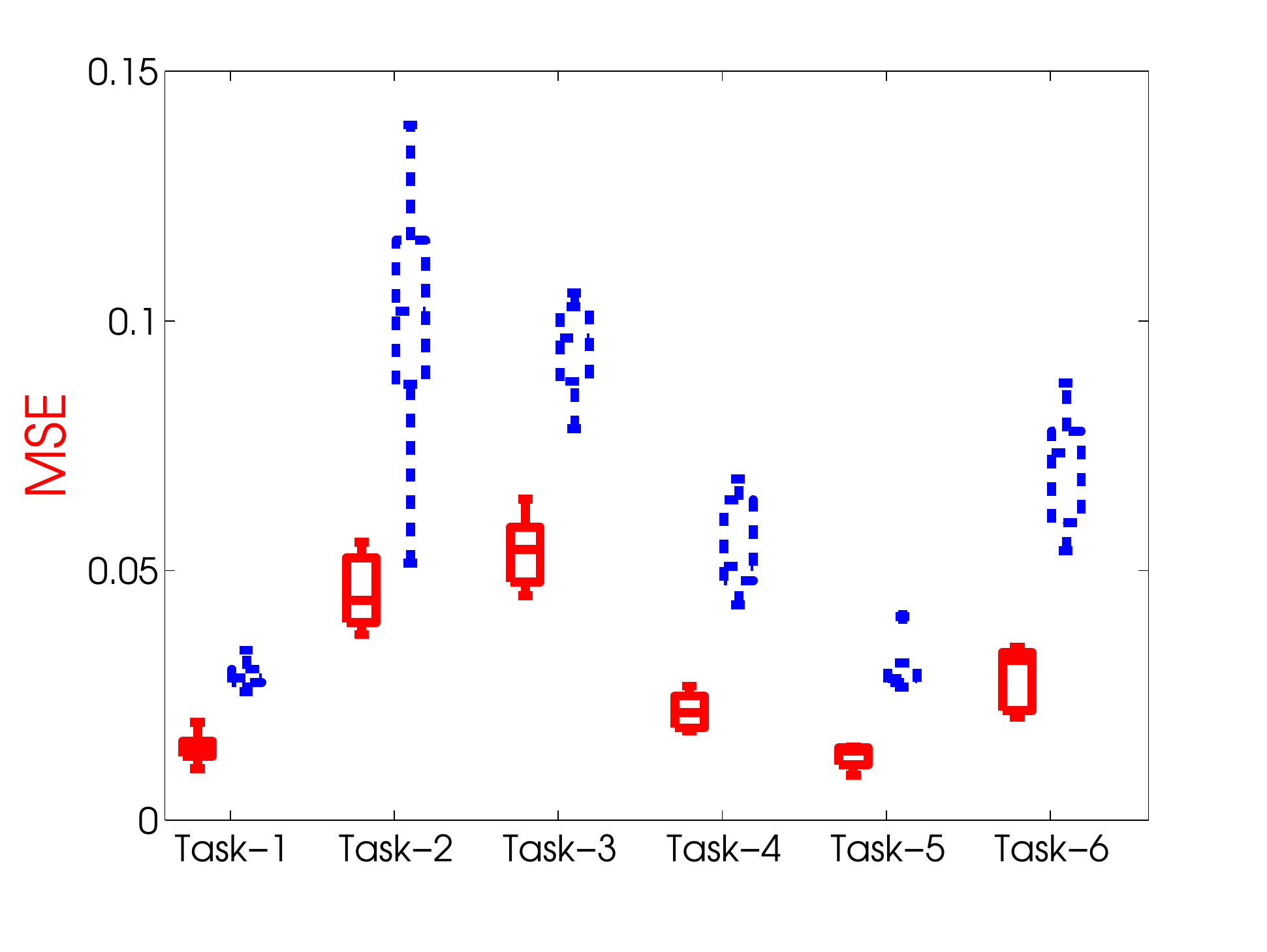}}
 \subfigure[ATP7d\label{fig:atp7d}]{\includegraphics[scale=0.25]{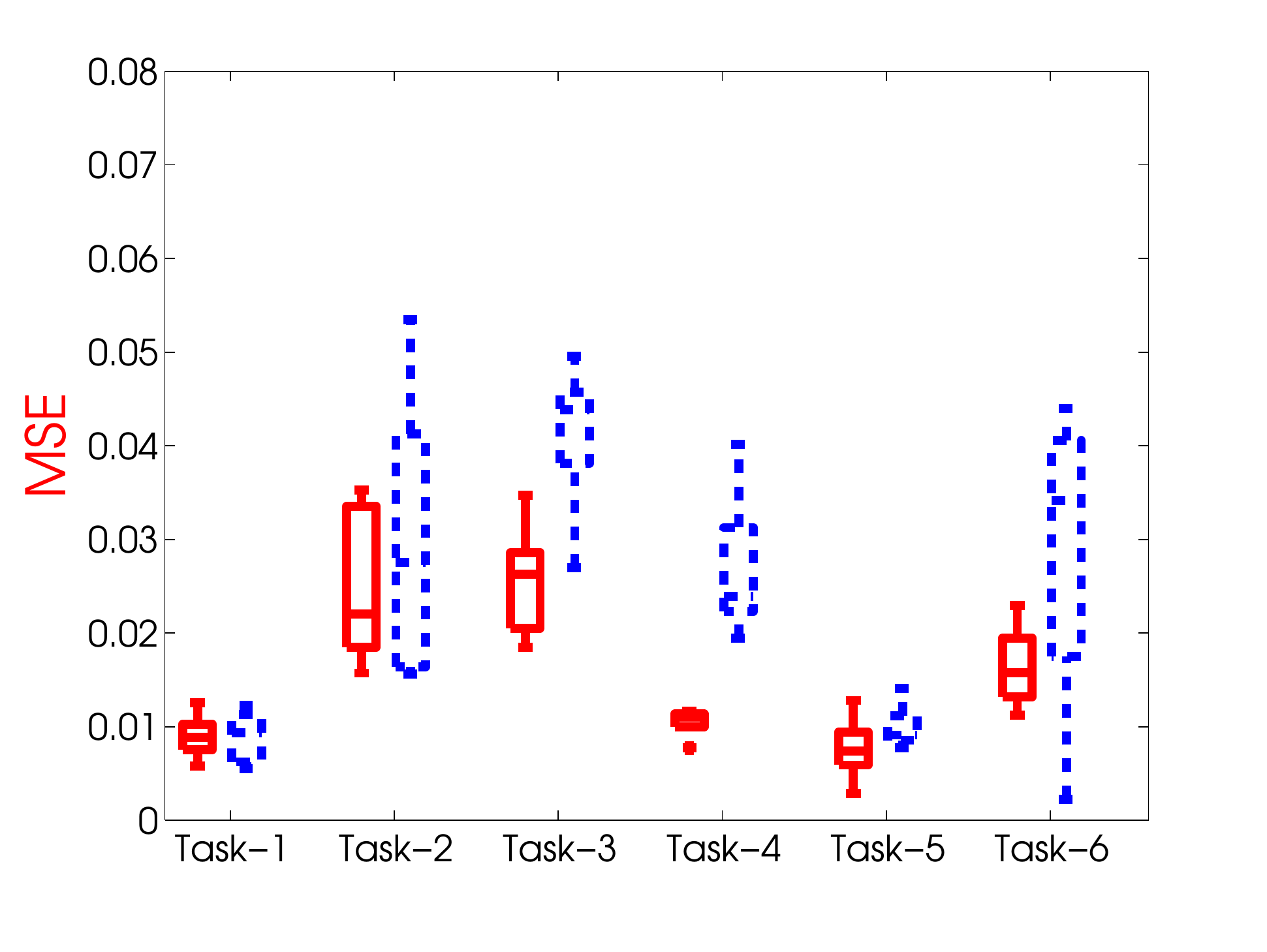}} 
\caption{Comparison between SLRM (solid, red boxes) and SMPCR (broken, blue boxes).\label{fig:real_all}}
\end{figure*}

{\bf Impact of non-zero $\sigma_x^2$.}
While the above experiments, demonstrate the superior performance of SLRM over SMPCR in the noiseless setting, it does not tell us how these algorithms perform in
the presence of noise. We shall now study the impact of non-zero $\sigma_x^2$ on the performance of both SLRM and SMPCR.
For a clearer understanding, we set $\sigma_y^2=0$, and fix the
size of the dataset to $n=12800$. Figure~\ref{fig:syn_feat}
shows the impact of increasing $\sigma_x^2$ on the MSE of
SLRM, and SMPCR.  As we can see from this figure, the MSE of
both SLRM and SMPCR gradually increases with increasing
$\sigma_x^2$. Like in the noiseless setting, the MSE of SLRM is always substantially smaller than the SMPCR method. 
%\sigma_y^2=0

{\bf Impact of non-zero $\sigma_x^2,\sigma_y^2$.}
We shall now examine the effect of non-zero values for
$\sigma_x^2,\sigma_y^2$ on the MSE of SLRM and SMPCR. In these
experiments, we fixed the size of the dataset to $n=12800$, and
increased $\sigma_x^2,\sigma_y^2$. For the sake of simplicity, we keep
$\sigma_x^2=\sigma_y^2$. As we can see
from Figure~\ref{fig:syn_feat_lab}, the MSE of both SLRM and SMPCR
increases with increasing $\sigma_x,\sigma_y$. In this case too, the
MSE of SLRM is substantially smaller than that of SMPCR. From our plots, SLRM seems to be more noise tolerant than SMPCR. 

{\bf Impact of increasing $p$.}
In this experiment, we examine the impact of increasing $p$ on the
error rate of the proposed learning algorithms. We fix
$\sigma_x^2=\sigma_y^2=1$, and the size of dataset to $n=12800$. Like in previous experiments $q$ is set to 0.75. As expected, the error rate of both SLRM, and SMPCR goes down as $p$ increases. 

\subsection{Experimental Results on Real datasets}
%\begin{enumerate}
%\item Source of the dataset
%\item Brief description of the dataset. What are the features, what are we %predicting. 
%\item Any preprocessing that was done? n,D,d. Was there a separate train and test? Were there missing features to start with? We incorporate missing features, by removing them with probability 0.9.
%\end{enumerate}
We performed experimental comparisons on 15 real world tasks. While an extended discussion of our datasets, has been relegated to the appendix, in this paper, we shall provide a brief description of the datasets. Our datasets are Leukemia, CT Slice, ATP1d, and ATP7d. Both ATP1d, and ATP7d~\citep{groves2011regression} have six tasks each related to airline ticket price prediction. The CT-slice dataset consists of 384 features obtained from CT scan images and the task is to estimate the relative location of the CT slice on the axial axis of human body. The Leukemia dataset~\citep{golub1999molecular}, and the colon-cancer dataset are high dimensional datasets with $D=7129$ and $D=2000$ respectively. Both these datasets have binary labels, $\{-1,+1\}$ as the target but for this paper we treat it as a regression problem. %For both these datasets, the task is to is to predict the prices for six target flight preferences, namely the price of any non-stop flight, Delta airlines, Continental airlines, Airtran, and United airlines. While for the ATP1d dataset these target prices are the next day price, for ATP7d the targets are the minimum price observed over the next 7 days. 

From Figures~\ref{fig:real}-~\ref{fig:atp7d}, it is clear that \textbf{the median MSE of SLRM is always lesser than the median MSE of SMPCR on all the tasks}.~\footnote{The horizontal bar in the barplots show the median MSE of the method} When labels are quantized, such as in classification problems, or noisy, the advantage gained from utilizing label information is limited. This explains why on the leukemia and colon cancer datasets, SLRM might, at times perform worse than SMPCR. As mentioned in Section~\ref{sec:learning_alg}, the superior performance of SLRM over SMPCR, on the CT slice dataset can be explained by the fact that for CT slice dataset, both $D=384$ and $d=181$ are large. On both ATP1d and ATP7d datasets, on almost all of the tasks, SLRM far outperforms SMPCR. %The MSE of both the methods is lower on ATP7d than on ATP1d. This is expected because, in ATP7d we are required to predict the minimum price of a ticket in the next 7 days which is relatively easier than the tasks in ATP1d where we are required to predict the price of the ticket on the next day.  
%%%%%%%%%%%%%%%%%%%%%%%%%%%%%%%%%%%%%%%%%%%%%%%%%%%%%%%%%%%%%%%%%%%%%%%%%%%%%%%%%%%%%%%%%%%%%%%%%%%%%%%%%%%%%%%%%%%%%%%%%%%%%%%%%%%%%%%%%%%%%%%%%%%%%%%%%%%%%%%%%%%%%%%%%%%%%%%%%%%%%%%%%%%%%%%%%%%%%%%%%%%%%%%%%%%%%%%%%%%%%%%%%%%%%%%%%%%%%%%%%%%  GRAVEYARD GRAVEYARD %%%%%%%%%%%%%%%%%%%%%%%%%%%%%%%%%%%%%%%%%%%%%%%%%%%%%%%%%%%%%%%%%%%%%%%%%%%%%%%%%%%%%%%%%%%%%%%%%%%%%%%%%%%%%%%%%%%%%%%%%%%%%%%%%%%%%%%%%%%%%%%%%%%%%%%%%%%%%%%%%%%%%%%%%%%%%%%%%%%%%%%%%%%%%%%%%%%%%%%%%%%%%%%%%%%%%%%%%%%%%%%%%%%%%%%%%%%%%%%%%%

\section{Conclusions and Future Work}
This paper studies the problem of regression with missing data. We proposed a new statistical model and an optimization based approach for learning the parameters of the model. We established risk bounds for our regressor, and demonstrate superior empirical performance over competing algorithms. This work can be extended in several ways. Instead of a single subspace assumption, it should be possible to extend our framework to handle the case when data is generated from a union of $K$ subspaces, using ideas in~\citep{xie2012multiscale}. Our framework can be  extended to handle other tasks using different loss functions and to multi-task learning problems via the use of appropriate matrix norms. 
%%%%%%%%%%%%%%%%%%%%%%%%%%%%%%%%%%%%%%%%%%%%%%%%%%%%%%%%%%%%%%%%%%%%%%%%%%%%%%%%%%%%%%%%%%%%%%%%%%%%% APPENDIX %%%%%%%%%%%%%%%%%%%%%%%%%%%%%%%%%%%%%%%%%%%%%%%%%%%%%%%%%%%%%%%%
\appendix
\section{Towards Proof of Theorem~\ref{thm:main}}
In this appendix we shall prove the result of Theorem~\ref{thm:main}.
In order to establish Theorem~\ref{thm:main}, we need the following few important definitions and results which have been taken from~\cite{srebro2010smoothness}
\begin{defn}
 Let $\sigma_{1:n}$ be a collection of Rademacher random variables. The worst case empirical Rademacher complexity of a function class $\cF$ is defined as 
  \begin{equation}
    \cRn(\cF)=\sup_{z_{1:n}} \sup_{f\in\cF} \frac{1}{n} |\sum_{i=1}^n h(z_i)\sigma_i|.
  \end{equation}
  Empiricial Rademacher complexity is defined as 
  \begin{equation}
    \hat{\cR}_n(\cF)=\sup_{f\in\cF} \frac{1}{n} |\sum_{i=1}^n h(z_i)\sigma_i|
  \end{equation}
\end{defn}
\begin{lemma}
  \label{lem:risk_sst}
  Let $l$ be an $H$ smooth non-negative loss, such that $\forall y_1,y_2,y_3, | l(y_1,y_2) - l(y_3,y_2)|\leq b$. Let $L(f)=\bbE_{z,y} l(f(z),y)$, and $\Lhat(f)=\frac{1}{n}\sum_{i=1}^n l(f(z_i),y_i)$. Then for any $\delta >0$, we have, with probability at least $1-\delta$, over a random sample of size $n$, for all $f\in \cF$,
\begin{equation}
  L(f)\leq \Lhat(f) +K\left[\sqrt{\Lhat(f)}\left(\sqrt{H}~\log^{1.5}n \cRn(\cF)+\sqrt{\frac{b\log(1/\delta)}{n}}\right)+H\log^3n~\cRn^2(\cF)+\frac{b\log(1/\delta)}{n}\right]
\end{equation} 
\end{lemma}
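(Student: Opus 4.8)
\begin{proofsketch}
The plan is to reproduce the local-Rademacher-complexity argument for smooth losses of \citet{srebro2010smoothness}; I sketch the steps here. The first step is the \emph{self-bounding} property of smooth nonnegative functions: if $\phi\geq 0$ is $H$-smooth then $(\phi'(t))^2\leq 2H\phi(t)$ for every $t$ (Taylor-expand $\phi$ about $t$ and use $\phi\geq 0$), so $t\mapsto\sqrt{\phi(t)}$ is $\sqrt{H}$-Lipschitz. Applying this to $u\mapsto l(u,y)$ for each fixed $y$, the class $\sqrt{l}\circ\cF$ is the image of $\cF$ under a $\sqrt{H}$-Lipschitz map, so Talagrand's contraction inequality gives $\chatRn(\sqrt{l}\circ\cF)\leq\sqrt{H}\,\chatRn(\cF)\leq\sqrt{H}\,\cRn(\cF)$. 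Since $l=(\sqrt{l})^2$ and $0\leq\sqrt{l}\leq\sqrt{b}$ by the boundedness hypothesis, this already controls $\chatRn(l\circ\cF)$, but only at the coarse scale $\sqrt{bH}\,\cRn(\cF)$, which by itself would produce a slow $n^{-1/2}$ rate.

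To obtain the sharper $\sqrt{\Lhat(f)}$-localized bound, I would peel the class by empirical risk. Partition $\cF$ into $O(\log n)$ geometric shells $\cF_j=\{f\in\cF:\,2^{j-1}r_0<\Lhat(f)\leq 2^{j}r_0\}$ down to a base scale $r_0$ of order $b\log(1/\delta)/n$. On a shell of radius $r$, the functions $\sqrt{l(f(\cdot),y)}$ have empirical second moment at most $r$, and combining this with the $\sqrt{H}$-Lipschitz contraction and the identity $l=(\sqrt{l})^2$ gives $\chatRn(l\circ\cF_j)\lesssim\sqrt{Hr}\,\cRn(\cF)$ rather than $\sqrt{bH}\,\cRn(\cF)$. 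A Talagrand--Bousquet (Bernstein-type) concentration inequality — this is exactly where the bounded-difference hypothesis $|l(y_1,y_2)-l(y_3,y_2)|\leq b$ enters — transfers each shell-wise bound from the empirical to the population measure, picking up a $b\log(1/\delta)/n$ deviation term; a union bound over the $O(\log n)$ shells costs only $\log\log n$, and the chaining/peeling is what produces the $\log^{1.5}n$ and $\log^3 n$ factors in the statement.

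Assembling the shell-wise inequalities, every $f$ falls into some shell with $\Lhat(f)\leq 2^{j}r_0$, and one obtains, uniformly over $\cF$ with probability at least $1-\delta$,
\begin{equation*}
L(f)\leq\Lhat(f)+c_1\sqrt{\Lhat(f)}\Bigl(\sqrt{H}\log^{1.5}n\,\cRn(\cF)+\sqrt{\tfrac{b\log(1/\delta)}{n}}\Bigr)+c_2\Bigl(H\log^{3}n\,\cRn^{2}(\cF)+\tfrac{b\log(1/\delta)}{n}\Bigr),
\end{equation*}
after solving the resulting quadratic in $\sqrt{L(f)}$ (the self-reference arises because the shell index is defined through $\Lhat(f)$) and folding constants into $K$. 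The hard part is the localization step: showing that the Rademacher complexity of $l\circ\cF$ restricted to the empirical-risk ball of radius $r$ scales like $\sqrt{Hr}\,\cRn(\cF)$ rather than $\sqrt{bH}\,\cRn(\cF)$, uniformly over all shells — this is precisely what the self-bounding property of smooth losses buys, and carrying it through requires the careful interplay of the contraction lemma, a sharp second-moment/Bernstein concentration bound, and the geometric peeling, together with the attendant bookkeeping of logarithmic factors.
\end{proofsketch}
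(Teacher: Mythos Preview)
The paper does not prove this lemma at all: it is quoted verbatim from \citet{srebro2010smoothness} and invoked as a black box (the sentence immediately following the lemma reads ``The above lemma was proved in~\cite{srebro2010smoothness}\ldots''). So there is no ``paper's own proof'' to compare against; the authors simply cite the result and then apply it with $z=(x,\Omega)$, $l$ the squared loss, and the Rademacher bound of their Lemma~\ref{lem:rad}.

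Your sketch, by contrast, is an outline of the actual argument in \citet{srebro2010smoothness}, and it hits the right ingredients: the self-bounding inequality $(\phi')^2\lesssim H\phi$ for nonnegative $H$-smooth $\phi$, Talagrand contraction to pass from $\cF$ to the loss class, geometric peeling over empirical-risk shells to localize, and Bousquet/Talagrand concentration (which is where the bounded-increment hypothesis $|l(y_1,y_2)-l(y_3,y_2)|\leq b$ enters). That is indeed the structure of their proof, and your account of where the $\sqrt{\Lhat(f)}$ factor and the polylogarithmic terms come from is accurate in spirit. One small technical note: in the original argument the self-bounding property is used in the form $|l(u,y)-l(u',y)|\lesssim\sqrt{H}\bigl(\sqrt{l(u,y)}+\sqrt{l(u',y)}\bigr)|u-u'|$ rather than by literally passing to $\sqrt{l}$ and back, but the two formulations are equivalent up to constants and your version is arguably cleaner to state.
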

The above lemma was proved in~\cite{srebro2010smoothness}, and was used for prediction problems without missing data. We shall use the above result for our problem by using $z=(x,\Omegax)$. This will enable us to provide generalization bounds for our regression problem with missing data.
\begin{lemma}~\cite{srebro2010smoothness}
\label{lem:dudley}
  For any function class $\cF$, containing functions $f:\cZ\rightarrow \bbR$, we have that 
  \begin{equation*}
    \hat{\cR}_n(\cF)=\inf_{\alpha\geq 0} \left[4\alpha+10\int_{\alpha}^{\sqrt{\hat{\bbE}f^2}} \sqrt{\frac{\log \cN_2(\epsilon,\cF,z_{1:n})}{n}} ~\mathrm{d}\epsilon\right].
  \end{equation*}
\end{lemma}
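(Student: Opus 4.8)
The plan is to establish the Dudley entropy-integral bound by the classical multi-resolution \emph{chaining} argument, working conditionally on the sample $z_{1:n}$ and viewing $f\mapsto \frac{1}{n}\sum_{i=1}^n \sigma_i f(z_i)$ as a stochastic process indexed by $\cF$ (here $\chatRn(\cF)$ is read with the expectation over the Rademacher signs $\sigma_{1:n}$, which is the object the maximal inequalities control). First I would equip $\cF$ with the empirical $L_2$ pseudometric $\|f-g\|_n\defeq\sqrt{\frac1n\sum_{i=1}^n(f(z_i)-g(z_i))^2}$, and observe that, conditionally on $z_{1:n}$, the increment $\frac1n\sum_i\sigma_i(f(z_i)-g(z_i))$ is a sum of independent mean-zero bounded terms, hence (by Hoeffding's lemma) sub-Gaussian with proxy variance proportional to $\|f-g\|_n^2/n$. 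This single sub-Gaussianity fact is what drives the entire argument, since the upper limit $\sqrt{\hat{\bbE}f^2}$ appearing in the statement is exactly the $\|\cdot\|_n$-radius $R\defeq\sup_{f\in\cF}\sqrt{\hat{\bbE}f^2}$ of $\cF$ in this metric.

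\textbf{Nets and telescoping.} Next I would set up the scales $\epsilon_j=2^{-j}R$ and, for each $j\ge0$, choose a minimal $\epsilon_j$-cover $T_j\subset\cF$ of cardinality $\cN_2(\epsilon_j,\cF,z_{1:n})$, with nearest-point maps $\pi_j:\cF\to T_j$; I anchor the chain at a fixed element so that the coarsest term contributes nothing in expectation. Given a free truncation level $\alpha\ge0$, I stop the chain at the first index $N$ with $\epsilon_N\le\alpha$ and telescope
\[
f = \pi_0(f) + \sum_{j=1}^{N}\bigl(\pi_j(f)-\pi_{j-1}(f)\bigr) + \bigl(f-\pi_N(f)\bigr).
\]
The residual $f-\pi_N(f)$ has $\|f-\pi_N(f)\|_n\le\epsilon_N\le\alpha$, so by Cauchy--Schwarz its Rademacher average is at most $\alpha$ uniformly in $f$; tracking this residual together with the anchoring constant produces the $4\alpha$ offset in the statement.

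\textbf{Links, summation, and the obstacle.} For each link, the difference set $\{\pi_j(f)-\pi_{j-1}(f):f\in\cF\}$ is finite with at most $|T_j|\,|T_{j-1}|\le|T_j|^2$ members and $\|\cdot\|_n$-diameter at most $\epsilon_j+\epsilon_{j-1}=3\epsilon_j$, so Massart's finite-class lemma (the sub-Gaussian maximal inequality) gives
\[
\bbE_\sigma\sup_{f\in\cF}\frac1n\Bigl|\sum_{i=1}^n\sigma_i\bigl(\pi_j(f)-\pi_{j-1}(f)\bigr)(z_i)\Bigr|\le \frac{C\epsilon_j}{\sqrt n}\sqrt{\log\cN_2(\epsilon_j,\cF,z_{1:n})}.
\]
Summing over $j$ and using that $\epsilon\mapsto\sqrt{\log\cN_2(\epsilon,\cF,z_{1:n})}$ is non-increasing with $\epsilon_j-\epsilon_{j+1}=\epsilon_{j+1}$, each discrete term is comparable to $\int_{\epsilon_{j+1}}^{\epsilon_j}\sqrt{\log\cN_2(\epsilon,\cF,z_{1:n})}\,\mathrm d\epsilon$, so the whole chaining sum is controlled by $\frac{1}{\sqrt n}\int_\alpha^R\sqrt{\log\cN_2(\epsilon,\cF,z_{1:n})}\,\mathrm d\epsilon$. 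Collecting the $4\alpha$ offset and this integral and taking the infimum over the truncation level $\alpha\ge0$ yields the claimed expression. \emph{The main obstacle}, and the step needing the most care, is the sum-to-integral conversion with the \emph{correct universal constants}: verifying that the geometric decay of the scales, the diameter factor $3\epsilon_j$, the $\sqrt{2\log(|T_j|^2)}$ from Massart, and the monotonicity bookkeeping combine to the exact factor $10$ (rather than an unspecified order constant), while the sub-Gaussian increment bound and the telescoping are otherwise routine.
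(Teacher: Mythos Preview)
Your chaining argument is the standard and correct route to this Dudley-type entropy integral bound, but note that the paper does not actually prove this lemma: it is quoted verbatim from \cite{srebro2010smoothness} and invoked as a black box. So there is no ``paper's own proof'' to compare against; you have supplied a full argument where the authors simply cite one.

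A few small remarks on your write-up. First, the lemma as stated uses ``$=$'', but what the chaining argument delivers---and what your outline correctly establishes---is only the upper bound ``$\leq$''; this is all that is needed downstream in Lemma~\ref{lem:rad} and Theorem~\ref{thm:main}, so treat the equality as a typo. Second, you were right to read $\chatRn$ with an expectation over the Rademacher signs $\sigma_{1:n}$; the paper's Definition~1 omits the $\bbE_\sigma$ (and writes $h$ for $f$), but the maximal inequalities you invoke (Hoeffding sub-Gaussianity, Massart's finite-class lemma) control exactly $\bbE_\sigma\sup_f$, which is the quantity the cited source works with. Third, your flagged ``obstacle''---pinning down the exact constants $4$ and $10$---is genuine but purely bookkeeping: the $4\alpha$ comes from the residual term plus one extra dyadic shell at the truncation boundary, and the $10$ absorbs the factor $3$ on the link diameter, the $\sqrt{2}$ from $\log(|T_j|^2)$, and the geometric-series constant in the sum-to-integral comparison. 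If you only need the lemma as an input to the rest of the paper (which is how it is used here), an unspecified universal constant would suffice.
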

\begin{lemma}
  \label{lem:rad}
\begin{equation*}
  \cRn(\cF_{\gamma})\leq \left(\frac{50m}{n}+\frac{14}{\sqrt{n}}\right)\frac{3DR_1B_X}{(1-\gamma)}. 
\end{equation*}
\end{lemma}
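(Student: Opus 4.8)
The plan is to apply the Dudley-type entropy integral of Lemma~\ref{lem:dudley} and thereby reduce the claim to a bound on the $L_2(z_{1:n})$ covering numbers $\cN_2(\epsilon,\cF_{\gamma},z_{1:n})$, where I write $z_i=(P_{\Omega_i}(x_i),\Omega_i)$.

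The first, easy, ingredient is a uniform sup-norm bound $\sup_{f\in\cF_{\gamma}}\sup_{z}|f(z)|\le M$, which may be taken as $M=\tfrac{DR_1B_X}{1-\gamma}$ (the exact constant is immaterial); this also upper bounds $\sqrt{\hat\bbE f^2}$, the upper limit of the integral in Lemma~\ref{lem:dudley}. On the event that the indicator in~\eqref{eqn:model_regressor} equals one we have $\|(\Uomega^\top\Uomega)^{-1}\|_2\le\tfrac{D}{m(1-\gamma)}$, hence $\|\Uomega(\Uomega^\top\Uomega)^{-1}\|_2^2=\|(\Uomega^\top\Uomega)^{-1}\|_2\le\tfrac{D}{m(1-\gamma)}$; combining this with $\|w\|_1\le R_1$, $\|\xomega\|_\infty\le B_X$, and $|\Omega|=m$ (so $\|\xomega\|_2\le\sqrt m\,B_X$) gives $|f_{w,U}(\xomega,\Omega)|=|w^\top(\Uomega^\top\Uomega)^{-1}\Uomega^\top\xomega|\le R_1\sqrt{D/(1-\gamma)}\,B_X\le M$, using $D/(1-\gamma)\ge 1$.

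The substance of the proof is the covering-number estimate, which I would obtain by covering the two parameters $w$ and $U$ separately. For $w$: since $f_{w,U}$ is linear in $w$ and $|f_{w,U}(z_i)-f_{w',U}(z_i)|\le\|w-w'\|_1\,\|(\Uomega^\top\Uomega)^{-1}\Uomega^\top\xomega\|_\infty$, I would run a Maurey-type empirical-approximation argument on $\{w:\|w\|_1\le R_1\}$ — writing $w$ as $R_1$ times a short mixture of the $2d$ signed coordinate vertices of the $\ell_1$-ball — yielding a log-covering number scaling like $\epsilon^{-2}\big(\tfrac{DR_1B_X}{1-\gamma}\big)^2$ up to a $\log d$ factor, crucially with no $d$ outside the logarithm. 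For $U$: I would cover the manifold $\{U:U^\top U=I_d\}$, using that the map $\Uomega\mapsto(\Uomega^\top\Uomega)^{-1}\Uomega^\top\xomega$ is Lipschitz exactly on the region carved out by the indicator in~\eqref{eqn:model_regressor} — this truncation is essential, since the relevant Lipschitz constant is governed by $\|(\Uomega^\top\Uomega)^{-1}\|_2$ — and then aggregating the per-sample estimates through $\tfrac1n\sum_i\|(U-U')_{\Omega_i}\|_F^2=\tfrac1n\sum_{j=1}^D c_j\,\|(U-U')_{j,:}\|_2^2$, where $c_j=|\{i:j\in\Omega_i\}|$ and $\sum_j c_j=nm$ because $|\Omega_i|=m$; the discontinuity of the indicator in $U$ is a minor point, dispatched using the uniform bound $M$ on the part of the cover where the truncation is active. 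Feeding the resulting entropy estimate into Lemma~\ref{lem:dudley}, optimizing the free cutoff $\alpha$, and finally passing to the worst case over $z_{1:n}$ (which only affects constants, since every quantity above is bounded uniformly in $z_{1:n}$) produces the claimed shape $\big(\tfrac{m}{n}+\tfrac1{\sqrt n}\big)\tfrac{3DR_1B_X}{1-\gamma}$: the $1/\sqrt n$ term is the usual chaining/Dudley contribution, while the $m/n$ term reflects the price of resolving the subspace parameter given that each sample constrains only $m$ of the $D$ rows of $U$.

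The step I expect to fight hardest with is the second covering — controlling the dependence on $U$ without paying ambient-dimension factors. A naive Frobenius-norm covering of $\{U:U^\top U=I_d\}$ has log-entropy of order $Dd\log(1/\epsilon)$, and carried through the Dudley integral this would overwhelm the target bound by polynomial factors in $D$ and $d$. The delicate point is to exploit that $U$ enters $f_{w,U}$ only through the $m$-row restrictions $\Uomega$ attached to the observed samples, and only through the well-conditioned quantity $(\Uomega^\top\Uomega)^{-1}\Uomega^\top\xomega$, so that the effective complexity together with the $\sum_j c_j=nm$ aggregation yields the $m/n$ rate rather than a dimension-dependent one. Recovering the explicit constants $50$, $14$, $3$ is then bookkeeping inside the Maurey bound and the Dudley integral, which I would not belabor.
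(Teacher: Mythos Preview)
Your route is genuinely different from the paper's, and the difficulty you flag at the end is real and, as far as I can see, not resolved by your sketch.

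The paper does \emph{not} cover $(w,U)$ separately. Its key move is to observe that, on the event where the indicator in~\eqref{eqn:model_regressor} is active, the prediction is simply a linear functional of $x_\Omega$:
\[
f_{w,U}(x_\Omega,\Omega)=\mu^\top x_\Omega,\qquad \mu\defeq U_\Omega(U_\Omega^\top U_\Omega)^{-1}w\in\bbR^m,
\]
and $\|\mu\|_2\le\|U_\Omega(U_\Omega^\top U_\Omega)^{-1}\|_2\,\|w\|_2\le\sqrt{\tfrac{D}{m(1-\gamma)}}\,R_1\le B\defeq\tfrac{DR_1}{m(1-\gamma)}$. Hence $\cF_\gamma$ is dominated by the \emph{single-parameter} linear class
\[
\cF_g\defeq\bigl\{(x_\Omega,\Omega)\mapsto\beta^\top x_\Omega:\ \|\beta\|_2\le B\bigr\},
\]
whose $L_2$ covering number is that of one $m$-dimensional Euclidean ball: $\log\cN_2(\epsilon,\cF_g,z_{1:n})\le m\log\bigl(3B\sqrt{m}\,B_X/\epsilon\bigr)$. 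Plugging this into Lemma~\ref{lem:dudley}, evaluating the resulting integral in closed form (a change of variable $\theta^2=\log(C/\epsilon)$ reduces it to an $\erf$), and optimizing over the cutoff $\alpha$ yields the stated bound; the $m/n$ term comes exactly from the factor $m$ in the entropy exponent, and the $1/\sqrt n$ from the usual Dudley contribution. No Maurey argument, no Stiefel-manifold covering, no Lipschitz analysis of $U\mapsto(U_\Omega^\top U_\Omega)^{-1}U_\Omega^\top x_\Omega$.

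This is precisely the shortcut that dissolves the obstacle you identify. Your plan keeps $w$ and $U$ as separate parameters and tries to cover the Stiefel manifold; as you say, the naive entropy there is $\sim Dd\log(1/\epsilon)$, and I do not see how the aggregation $\sum_j c_j=nm$ by itself brings this down to $m$: the row-counting controls the \emph{metric} in which you measure $U-U'$, not the intrinsic dimension of the parameter set you are covering. Absent a concrete mechanism for that reduction, the proposal has a gap at exactly the step you anticipated. The fix is the paper's: collapse $(w,U)$ into the single effective vector $\mu$ of bounded $\ell_2$ norm and cover that instead.
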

\begin{proof}
For the sake of convenience, let $B\defeq
\frac{DR_1}{m(1-\gamma)}$. Also throught this document, for the sake
of conciseness, we shall use
the pair $(x,\Omega)$ to mean $(P_{\Omega}(x),\Omega)$. Similarily
$(x_i,\Omega_i)$ would mean $(P_{\Omega_i}(x_i),\Omega_i)$
Instead of bounding the Rademacher complexity of $\cF$, we shall work with a slightly different function class $\cF_g\defeq \{f(x,\Omega)=\beta^T\xomega :||\beta||_2 \leq B \}$. It is now, enough to control the Rademacher complexity of the function class $\cF_g$, since all functions $f\in\cF_{\gamma}$, can be written as $f(x,\Omega)=\mu^T\xomega$, for an appropriate $\mu$ where, by definition of $\cF_{\gamma}$, and using Cauchy-Schwartz inequality, we can guarantee that $||\mu||_2$ is upper bounded by $B$, and hence $\cRn(\cF_{\gamma})\leq \cRn(\cFg)$. The rest of the proof upper bounds $\cRn(\cFg)$. We control this Rademacher complexity via lemma~\ref{lem:dudley}. Let $f_1,f_2\in\cF_g$. 
\begin{align}
 d_2(f_1,f_2,x_{1:n},\Omega_{1:n})&= \sqrt{\frac{1}{n}\sum_{i=1}^n (f_1(x_i,\Omega_i)-f_2(x_i,\Omega_i))^2}\\
 &= \sqrt{\frac{1}{n}\sum_{i=1}^n (\beta_1^Tx_{i,\Omega_i}-\beta_2^Tx_{i,\Omega_i})^2}\\
 &\leq \sqrt{\frac{1}{n}\sum_{i=1}^n ||\beta_1-\beta_2||_2^2 ||x_{i,\Omega_i}||_2^2}\label{eqn:last_but_one}\\
 &\leq\sqrt{m||X||_{\infty}^2 ||\beta_1-\beta_2||^2 }\label{eqn:last}
\end{align}
where, in order to obtain Equation~\ref{eqn:last} from ~\ref{eqn:last_but_one} we used the fact that  $||x_{i,\Omega_i}||_2^2\leq \sqrt{m ||X||_{\infty}^2}$ 
To upper bound the above R.H.S. by $\epsilon$, we need
$||\beta_1-\beta_2||\leq
\frac{\epsilon}{\sqrt{m}||X||_{\infty}}$. Hence, to cover $\cF_g$, it
is enough to cover an $\ell_2$ ball of radius $B$, with $\ell_2$ ball
of radius $\frac{\epsilon}{\sqrt{m}||X||_{\infty}}$. Now, we know
that the cover a ball of radius $R$, with balls of radius $\epsilon$,
in $d$ dimensions we need $(\frac{3R}{\epsilon})^d$. Using this
result, we can conclude that $\cN_2(\cF_g,z_{1:n})\leq\left(\frac{3B\sqrt{m}{||X||_{\infty}}}{\epsilon}\right)^m$. Plugging, this into the Dudley entropy integral, and using lemma~\ref{lem:dudley}, we get
\begin{equation}
  \chatRn(\cFg)\leq\min_{\alpha\geq 0} 4\alpha+10\int_{\alpha}^{\sup_{f\in \cF_g}\sqrt{\hat{E}(f^2)}} \sqrt{\frac{m}{n}\log\left(\frac{3DR_1||X||_{\infty}}{\epsilon}\right)}~\mathrm{d}\epsilon
\end{equation}
For the sake of simplicity, let us denote by $F\defeq\sup_{f\in \cF_g}\sqrt{\hat{E}(f^2)}$, and by $C\defeq \frac{3DR_1||X||_{\infty}}{\sqrt{m}(1-\gamma)}$. It is easy to see that $F\leq \frac{DR_1||X||_2}{m(1-\gamma)}\leq C$. With this notation, the above inequality can be manipulated as follows
\begin{align}
  \chatRn(\cFg)&\leq 4\alpha+\frac{10}{\sqrt{n}} \int_{\alpha}^F \sqrt{m(\log(C)-\log(\epsilon))}~\mathrm{d}\epsilon\\
  &\leq 4\alpha -\frac{20C\sqrt{m}}{\sqrt{n}}\int_{\sqrt{\log(C)-\log(\alpha)}}^{\sqrt{\log(C)-\log(F)}}\theta^2\exp(-\theta^2)~\mathrm{d}\epsilon 
\end{align}
where the above expression is obtained by the change of variable, $\theta^2=\log(C)-\log(\epsilon)$. Substituting $K_2=\sqrt{\log(C)-\log(F)}$, for the upper limits of the integral appearing above, we get 
\begin{align}
  \chatRn(\cFg)&\leq 4\alpha-\frac{20C\sqrt{m}}{\sqrt{n}}\int_{\sqrt{\log(\frac{C}{\alpha})}}^{K_2}\theta^2\exp(-\theta^2)~\mathrm{d}\theta\\
  &\leq 4\alpha-\frac{10\alpha\sqrt{m}}{\sqrt{n}}\sqrt{\log\left(\frac{C}{\alpha}\right)}+5C\sqrt{\frac{m\pi}{n}}\erf\left(\sqrt{\log\left(\frac{C}{\alpha}\right)}\right)+10C\sqrt{\frac{m}{n}} K_2e^{-K_2^2}-5C\sqrt{\frac{m\pi}{n}}\erf(K_2)\nonumber\\
&\leq 4\alpha +10C\sqrt{\frac{m}{2en}}+5C\sqrt{\frac{m\pi}{n}}\erf\left(\sqrt{\log\left(\frac{C}{\alpha}\right)}\right)\label{eqn:eqn1}
\end{align}
where last equation was obtained by using the inequality $x e^{-x^2}\leq \frac{1}{\sqrt{2e}}$, and by dropping all the negative terms. We can now optimize over $\alpha$, by setting the gradient to 0, to get 
\begin{equation}
  \alpha^{*}=C\exp\left(-\frac{4n+25m+\sqrt{16n^2+200mn}}{50m}\right)
\end{equation} 
Substituting $\alpha^{*}$ for $\alpha$ in Equation~\ref{eqn:eqn1}, and over-estimating $\erf(\cdot)$ by 1, we get 
\begin{equation}
  \chatRn(\cFg)\leq 10C\sqrt{\frac{m}{2en}}+5C\sqrt{\frac{m\pi}{n}}+\frac{4C}{\sqrt{e}} e^{-\frac{0.16n}{m}}\\
\end{equation}
Replacing $C$, by its definition, we get 
\begin{equation}
  \chatRn(\cFg)\leq \left(\frac{10}{\sqrt{2en}}+\frac{5\sqrt{\pi}}{\sqrt{n}}+4e^{-\frac{0.16n}{m}}\right)\frac{3DR_1||X||_{\infty}}{1-\gamma}\leq \frac{45DR_1||X||_{\infty}}{(1-\gamma)\sqrt{n}}
\end{equation}
Since the above quantity is independent of the sample, hence the above bound on $\cRn(\cFg)$ also holds for $\cR(\cFg)$, i.e.
\begin{align}
  \cRn(\cFg)\leq \left(\frac{10}{\sqrt{2en}}+\frac{5\sqrt{\pi}}{\sqrt{n}}+4e^{-\frac{0.16n}{m}}\right)\frac{3DR_1||X||_{\infty}}{1-\gamma}
\leq \left(\frac{50m}{n}+\frac{14}{\sqrt{n}}\right)\frac{3DR_1B_X}{(1-\gamma)}.
\end{align}
\end{proof}
\textbf{Proof of Theorem~\ref{thm:main}}.
The theorem now follows immediately from Lemma~\ref{lem:risk_sst},  the squared loss, which is by definition 2-smooth and
Lemma~\ref{lem:rad}. The role of $z$ is played by the random pair
$(x,\Omega)$. The following trivial bound for $b$, required in
Theorem~\ref{lem:risk_sst} holds\begin{equation}
  b\leq |l(y_1,y_2)-l(y_1,y_3)|\leq   |l(y_1,y_2)|+|l(y_1,y_3)|\leq  2\left(B_Y+\frac{DR_1}{m(1-\gamma)}\right)^2
\end{equation}
\section{Description of datasets}
In this appendix we provide information regarding the datasets that were used in our experiments in Section~\refeq{sec:expts}.
We shall provide a description of the datasets that were used in Section 6.
\begin{enumerate}
\item \textbf{Leukamia dataset.} The Leukamia dataset was obtained from  \url{http://www.csie.ntu.edu.tw/~cjlin/libsvmtools/datasets/}. It is a cancer classification dataset~\cite{golub1999molecular} where the features are gene expression levels, and our task is to classify whether the gene expression levels indicate acute myeloid leukemia or acute lymphoblastic leukemia. This dataset comes with  separate train and test datasets of a total of 72 points with $D=7129$. For our experiments we merge both the train and test datasets, and randomly sample 30 data points for training, 22 data points for testing, and the remaining for validation. This was repeated five times to obtain five different training, test and validation datasets. On each of the training sets, we retained a feature with probability 0.1, to simulate the missing data scenario. 
\item \textbf{CT slice.} The CT slice dataset was obtained from~\url{https://archive.ics.uci.edu/ml/datasets/Relative+location+of+CT+slices+on+axial+axis}. The dataset consists of 384 features obtained from CT scan images. The task is to estimate the relative location of CT slice on the axial axis of human body. The original dataset consisted of 53500 data points. For our experiments we sampled 300 data points uniformly at random, of which 100 each were used for training, testing, and validation. This process was repeated five times, to obtain five different datasets. 
%\item \textbf{Colon cancer.} The colon cancer dataset was obtained from~\url{http://www.csie.ntu.edu.tw/~cjlin/libsvmtools/datasets/}. This dataset~\cite{alon1999broad} consists of 62 data points, of 2000 features each, where the features represent gene expression levels. The task is classification, where we have to classify a cell as either tumerous or normal.  For our experiments we randomly sampled 25 data points to form a training set, 25 data points to form the test dataset, and the remaining 12 points for validation. As done for previous datasets, we repeated the random sampling five times, to obtain five different sets of training, testing and validation datasets. A feature in the training dataset was retained with probability $p=0.1$. 
\item \textbf{ATP1d, ATP7d.} The ATP1d, ATP7d datasets were obtained from~\url{http://mulan.sourceforge.net/datasets-mtr.html}. The ATP1d and ATP7d datasets are airline ticket price prediction datasets. where the problem is to predict the prices for six target flight preferences, namely the price of any non-stop flight, Delta airlines, Continental airlines, Airtran, and United airlines. While for the ATP1d dataset these target prices are the next day price, for ATP7d the targets are the minimum price observed over the next 7 days.  The input features for each
sample are values that are useful for prediction of the airline ticket prices
for a specific observation date-departure date pair. The features include quantities like day-of-the-week of the observation date, number of days between observation date and departure, and several other price related features such as minimum quoted price, mean quoted price etc...In order to normalize our features, we divided all our price related features by 1000, and the feature which measures the number of days between observation date and the day of departure by 180. See~\cite{spyromitros2012multilabel,groves2011regression} for more details on these datasets. We converted the cardinal day-of-the-week feature into a 7 dimensional boolean vector. The sizes of our training, testing, and validation datasets are 200,46,50 respectively. These were obtained via random sampling from the original dataset. 
\end{enumerate}
\bibliographystyle{natbib}
\footnotesize{\bibliography{slurm_arxiv}}
\end{document}